\begin{document}
\title{PriorBoost: An Adaptive Algorithm for Learning from Aggregate Responses}

\author[1,2]{Adel Javanmard}
\author[2]{Matthew Fahrbach}
\author[2]{Vahab Mirrokni}
\affil[1]{University of Southern California, \texttt{ajavanma@usc.edu}}
\affil[2]{Google Research, \texttt{\{fahrbach,mirrokni\}@google.com}}

\date{}

\maketitle

\begin{abstract}
This work studies algorithms for learning from aggregate responses.
We focus on the construction of aggregation sets (called \emph{bags} in the literature) for event-level loss functions.
We prove for linear regression and generalized linear models (GLMs) that the optimal bagging problem reduces to
one-dimensional size-constrained $k$-means clustering.
Further, we theoretically quantify the advantage of using curated bags over random bags.
We then propose the \texttt{PriorBoost} algorithm, which adaptively forms bags of samples
that are increasingly homogeneous with respect to (unobserved) individual responses
to improve model quality.
We study label differential privacy for aggregate learning,
and we also provide extensive experiments showing that \PriorBoost
regularly achieves optimal model quality for event-level predictions,
in stark contrast to non-adaptive algorithms.
\end{abstract}

\section{Introduction}
\label{sec:introduction}

In supervised learning, the learner is given a training dataset of $n$ i.i.d pairs $(\mat{x}_i, y_i)$, where 
$\mat{x}_i \in \R^d$ is a feature vector and
$y_i$ is the corresponding response.
Responses are real-valued for regression problems,
and belong to a finite discrete set for multi-class classification.
The fundamental problem in supervised learning is to (1) train a model with this data,
and (2) use this model to infer the response/label of unseen test instances.
However, in many practical applications (e.g., medical tests and elections),
the responses contain sensitive information,
but the features are far less sensitive (e.g., demographic information or zip codes/regions).
In such applications, there are valid concerns about revealing \emph{individual responses}
to the learning algorithm, even if it is a trusted party.  

A popular approach to mitigate this privacy concern in practice
is to let the learner access responses in an \emph{aggregate manner}.
In the framework of \emph{learning from aggregate responses} (LAR),
the learner is given access to a collection of unlabeled feature vectors called \emph{bags}
and an aggregate summary of the responses in each bag.
A widely used choice is the mean response or label proportions of each bag~\citep{yu2014learning}.
The learner then fits a model using the aggregate responses with the goal of
accurately \emph{predicting individual responses} on future data.

The problem of learning from aggregate responses (a.k.a.\ learning from label proportions in the context of classification) dates back to at least~\citet{wein1996pooled} in the context of group testing, a technique used in many different fields including medical diagnostics, population screening, and quality control. The idea is to combine multiple samples into a group and test them together rather than individually. This approach has been widely adopted in cases where testing resources are limited or the prevalence of the condition being tested for is low.
LAR has also been studied in other earlier work~\citep{DBLP:conf/uai/FreitasK05,musicant2007supervised,quadrianto2008estimating,rueping2010svm,NIPS2014_a8baa565} for settings where direct access to the individual responses is not possible (e.g., in political party elections where aggregate votes are only available at discrete district levels).

Recently, there has been a resurgence in the LAR framework primarily due to the rise of privacy concerns; see~\citep{scott2020learning,saket2022algorithms,zhang2022learning,busa2023easy,chen2023learning,brahmbhatt2023llp,javanmard2024loss} for a non-exhaustive list.
Specifically, if the aggregation bags are large enough and have no (or little) overlap,
revealing only the aggregate responses provides a layer of privacy protection,
often formalized in terms of \emph{$k$-anonymity}~\citep{sweeney2002k}.
Large tech companies have recently deployed aggregate learning frameworks, including
Apple's SKAdNetwork library~\citep{kollnig2022goodbye}
and
the Private Aggregation API in the Google Privacy Sandbox~\citep{geradin2020google}.
Aggregate responses can further be perturbed to provide label differential privacy~\citep{chaudhuri2011sample},
a popular notion of privacy that measures the leakage of personal label/response information,
which we discuss in detail in \Cref{sec:differential_privacy}.

In some applications, the bagging configurations are naturally determined by the problem at hand
(e.g., in the voting example above the bags are defined based on districts).
In other applications, however, the learner has the flexibility of curating bags of query samples to maximize model utility while complying with  privacy or legal constraints imposed by the data regulators.
Our work focuses on the problem of \emph{bag curation}
in the framework of learning from aggregate responses. 

\subsection{Problem statement}
We first describe the process of learning from aggregate responses, for a given collection of bags. Consider a partition of $n$ samples into $m$ non-overlapping bags, each of size at least $k$, for a prespecified $k$ (and hence $n\ge m k$). We focus on training a model by minimizing the following \emph{event-level loss}:
\begin{align}
    \label{eq:event0}
        \hth
        :=
        \argmin_{\th} \frac{1}{n} \sum_{\ell=1}^m \sum_{i\in B_\ell} \mathcal{L}(\overline{y}_{\ell},f_{\th}(\mat{x}_i))\,,
    \end{align}
    where $B_\ell$ is the set of samples in bag $\ell$
    and $\overline{y}_{\ell}$ is the mean response in bag $\ell$.
    In words, with this approach the model is learned by fitting individual predictions to the average response of its bag.
    
    The problem of bag curation is to find an optimal bagging configuration
    that maximizes model utility (in terms of minimizing estimation error),
    while satisfying the minimum bag size constraint $|B_\ell| \ge k$. Note that this
    min-size constraint implies $k$-anonymity in the sense of that any response in the (aggregate) dataset is shared by at least $k$ individuals. Larger values of $k$ offer higher protection of individual responses.  

\subsection{Overview of our approach and contributions}
This work focuses on event-level loss and the problem of bag curation.
To control privacy leakage, we require the bags to be non-overlapping
and of size at least $k$.
An important property of our mechanism is the following: 
The learner \emph{never sees an individual response}.
Conceptually, the learner always constructs a query of fresh samples
to send to an oracle,
who then returns the aggregate response.

Our key insight is to leverage available prior information about $\E[y \mid \bx]$
to construct better bags for the learner.
Such prior information can be based on domain knowledge,
models trained on public data,
or even \emph{previous iterations of an aggregate learning algorithm}.

We summarize our contributions as follows.
\begin{itemize} 
    \item {\bf Reduction to size-constrained $k$-means clustering.} We first present our method assuming access to a prior. We start with linear regression and characterize the dependence of the model estimation error on the bag construction.
    We then show that finding optimal bags reduces to a one-dimensional size-constrained $k$-means clustering problem
    that involves prior information on the \emph{expected} response of samples.
    In Section~\ref{sec:glms}, we then extend our derivations to the family of generalized linear models.
    
    \item{\bf Advantage over random bagging.} In Section~\ref{sec:comparison}, we theoretically demonstrate the improvement of our bagging approach over schemes that construct bags independently of data (including random bagging).

    \item{\bf Iterative prior-boosting algorithm.}
    In Section~\ref{sec:algorithm}, we propose an adaptive algorithm called \texttt{PriorBoost}, which constructs
    a good prior from the aggregate data itself.
    It can be used even in settings where no public prior distribution is available.
    \PriorBoost partitions the training data across multiple stages: it start with random bagging,
    and then \emph{iteratively refines the prior}
    by constructing more consistent bags on the remaining data.

    \item{\bf Differentially private LAR.}
    In Section~\ref{sec:differential_privacy}, we propose a mechanism that adds Laplace noise to aggregate responses to ensure label differential privacy.
    We observe an intriguing tradeoff on the choice of minimum bag size~$k$.
    On the one hand, larger $k$ implies less sensitivity of aggregate responses to individual substitution and hence less noise is needed to ensure privacy.
    On the other hand, smaller~$k$ results in smaller bias of the trained model.
    The optimal choice of $k$ (for a fixed privacy budget $\varepsilon$)
    depends on how these two effects contribute to the model test loss.
    We showcase this tradeoff empirically and discuss how the optimal~$k$ varies with the sample size $n$,
    features dimension~$d$, and bag construction algorithm.

    \item{\bf Experiments.}
    We study \PriorBoost through extensive experiments in Section~\ref{sec:experiments}.
    This includes a comparison with random bagging for linear and logistic regression tasks,
    as well as a careful exploration into label differential privacy with Laplace noise
    for different privacy budgets.

\end{itemize}


\subsection{Other related work}
An active line of work in LAR is centered around the design of new loss functions.
In addition to the the event-level loss in~\eqref{eq:event0},
another popular choice is \emph{bag-level loss} (or aggregate likelihood),
which measures the mismatch between the  aggregate responses $\overline{y}_\ell$ and the \emph{average} model predictions
$\sfrac{1}{|B_\ell|}\sum_{i\in B_\ell} f_{\th}(\bx_i)$ across bags $\ell\in[m]$~\citep{rueping2010svm,yu2014learning}.
\citet{javanmard2024loss} study the statistical properties of both losses and show that for quadratic loss functions $\ell(x,y) = (x-y)^2$, the event-level loss can be seen as a regularized form of the bag-level loss.
They propose a novel interpolating loss that optimally adjusts the strength of the regularization.

It is worth noting that in many large-scale production ML systems, models are often trained online~\cite{anil2022factory,fahrbach2023learning,coleman2023unified},
and event-level loss is more amenable to online optimization.
A separate system can be in charge of bagging and generating aggregate responses without the learner needing to know the bagging structure.
In contrast, bag-level loss minimization requires computing average predictions for each bag, making it more challenging to implement, especially with mini-batch SGD where all samples in a bag must be in the same batch.
We note that the works discussed above mainly consider random bagging.
Closer to our goal,~\citet{chen2023learning} study the problem of bag curation,
but they take a different approach than ours by grouping samples by common features instead of predicted response values.

\section{Warm-up: Linear regression}
\label{sec:linear_regression}

The high-level intuition behind our approach is that useful bagging configurations are ones where aggregate responses are close to their individual responses.
This allows for the estimator to be close to the \emph{empirical risk minimizer} (ERM),
similar to teacher-student knowledge distillation~\citep{hinton2015distilling}.
Our goal is therefore to use available predictions $\ty \approx {\E[y \mid \bx]}$ based on
prior information to construct better bags for the aggregate learner.

To illustrate this idea, we start with a linear regression setup
where response $y_i$ is generated as
\begin{align}\label{eq:linear}
    y_i = \bx_i^\sT \tth +\eps_i\,,\quad \eps_i\sim\normal(0,\sigma^2).
\end{align}
The design matrix is
$\bX = \begin{bmatrix} \bx_1 & \dots & \bx_n \end{bmatrix}^\sT \in \reals^{n\times d}$,
the response vector is $\by = (y_1,\dots, y_n)^\sT$,
and the noise vector is $\beps = (\eps_1,\dots, \eps_n)^\sT$.
We assume $\beps$ is independent of $\bX$, and that $\E[\beps] = \boldsymbol{0}$ and $\E[\beps\beps^\sT] = \sigma^2 \Iden$.
Letting $m$ denote the number of bags,
we encode the assignment of samples to bags with a
matrix $\bS\in\reals^{m\times n}$, where
\begin{align}\label{eq:S}
    \bS_{\ell,i} =
    \begin{cases}
        \frac{1}{\sqrt{|B_\ell|}} & \text{if } i \in B_\ell,\\
        0 & \text{otherwise}.
    \end{cases}
\end{align}

Consider the event-level loss minimizer of~\eqref{eq:event0} with $\mathcal{L}$ being least squares loss,
which we can write as
\begin{align}\label{eq:hth}
    \hth =\argmin_{\th} \frac{1}{n}\twonorm{\bS^\sT\bS \by - \bX\th}^2.
\end{align}

\subsection{Bounding the estimator error}
Our next result characterizes the error of this estimator.
All proofs in this section are deferred to \Cref{app:linear_regression}.

\begin{thm}\label{thm:linear-regression}
If the design matrix $\bX\in\reals^{n\times d}$ has rank $d$,
then for the estimator $\hth$ given by~\Cref{eq:hth}, we have
\begin{align}
\label{eq:Linear_regression}
    \E\Big[ \twonorm{\hth-\tth}^2 ~\Big|~ \bX \Big]
    =&
    \twonorm{(\bX^\sT\bX)^{-1}\bX^\sT (\bS^\sT\bS - \Iden)\bX\tth}^2
    + \sigma^2 \fronorm{(\bX^\sT\bX)^{-1}\bX^\sT\bS^\sT}^2\,.
\end{align}
\end{thm}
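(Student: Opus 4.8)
The plan is to compute the estimator in closed form and then take conditional expectation and variance over the noise $\beps$. First I would write out $\hth$ explicitly. The objective in~\eqref{eq:hth} is a standard least-squares problem in $\th$ with "response" $\bS^\sT\bS\by$ and design $\bX$; since $\bX$ has rank $d$, the normal equations give
\[
\hth = (\bX^\sT\bX)^{-1}\bX^\sT \bS^\sT\bS\, \by.
\]
One should double-check that $\bS^\sT\bS$ is exactly the block-diagonal averaging operator: for $i,j$ in the same bag $B_\ell$ it equals $1/|B_\ell|$, and $0$ otherwise, so $\bS^\sT\bS\by$ is the vector whose $i$th entry is $\overline{y}_\ell$ for $i\in B_\ell$, consistent with the event-level loss in~\eqref{eq:event0}. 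This identification is the one "modeling" step; after that the argument is linear algebra.

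Next I would substitute the data-generating model $\by = \bX\tth + \beps$ from~\eqref{eq:linear}. This yields
\[
\hth - \tth = (\bX^\sT\bX)^{-1}\bX^\sT\bS^\sT\bS(\bX\tth+\beps) - \tth
= (\bX^\sT\bX)^{-1}\bX^\sT(\bS^\sT\bS-\Iden)\bX\tth + (\bX^\sT\bX)^{-1}\bX^\sT\bS^\sT\bS\,\beps,
\]
using $(\bX^\sT\bX)^{-1}\bX^\sT\bX = \Iden$ to produce the $-\Iden$ term. Write this as $\hth-\tth = \bm{b} + \bm{v}$, where $\bm{b}$ is the deterministic (given $\bX$) bias vector and $\bm{v}$ is the mean-zero stochastic term, since $\E[\beps\mid\bX]=\boldsymbol{0}$. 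Then $\E[\twonorm{\hth-\tth}^2\mid\bX] = \twonorm{\bm b}^2 + 2\,\bm b^\sT\E[\bm v\mid\bX] + \E[\twonorm{\bm v}^2\mid\bX] = \twonorm{\bm b}^2 + \E[\twonorm{\bm v}^2\mid\bX]$, the cross term vanishing by mean-zero-ness of $\beps$.

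The first term is exactly the first term of~\eqref{eq:Linear_regression}. For the second term, let $M := (\bX^\sT\bX)^{-1}\bX^\sT\bS^\sT\bS$, so $\bm v = M\beps$ and $\E[\twonorm{M\beps}^2\mid\bX] = \Tr(M\,\E[\beps\beps^\sT\mid\bX]\,M^\sT) = \sigma^2\Tr(MM^\sT) = \sigma^2\fronorm{M}^2$, using $\E[\beps\beps^\sT]=\sigma^2\Iden$ and the independence of $\beps$ and $\bX$. It remains to see that $\fronorm{M}^2 = \fronorm{(\bX^\sT\bX)^{-1}\bX^\sT\bS^\sT}^2$, i.e.\ that the trailing factor of $\bS$ can be dropped. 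This is the one spot needing a small observation: $\bS$ has orthonormal rows (each row of $\bS$ is supported on one bag with entries $1/\sqrt{|B_\ell|}$, so $\bS\bS^\sT = \Iden_m$), hence $\bS^\sT\bS$ is an orthogonal projection and, more to the point, for any matrix $A$ we have $\fronorm{A\bS^\sT\bS}^2 = \Tr(A\bS^\sT\bS\bS^\sT\bS A^\sT) = \Tr(A\bS^\sT\bS A^\sT) = \fronorm{A\bS^\sT}^2$ using $\bS\bS^\sT=\Iden$. Applying this with $A = (\bX^\sT\bX)^{-1}\bX^\sT$ gives the claimed second term, completing the proof.

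The only mild obstacle is bookkeeping: correctly identifying $\bS^\sT\bS$ as the bag-averaging matrix and remembering to use $\bS\bS^\sT = \Iden$ (not $\bS^\sT\bS=\Iden$, which is false) to collapse $\fronorm{A\bS^\sT\bS}$ to $\fronorm{A\bS^\sT}$; everything else is a routine bias–variance decomposition.
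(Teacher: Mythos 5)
Your proposal is correct and follows essentially the same route as the paper's proof: derive the closed form $\hth = (\bX^\sT\bX)^{-1}\bX^\sT\bS^\sT\bS\by$, substitute the linear model to split $\hth-\tth$ into a deterministic bias term and a mean-zero noise term, and use $\bS\bS^\sT=\Iden_m$ (disjoint bags) to reduce $\fronorm{(\bX^\sT\bX)^{-1}\bX^\sT\bS^\sT\bS}^2$ to $\fronorm{(\bX^\sT\bX)^{-1}\bX^\sT\bS^\sT}^2$. All the key observations, including the one subtle step about which product of $\bS$ with its transpose is the identity, match the paper's argument.
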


An optimal bagging configuration (in the sense of minimizing the estimation error) is one whose matrix~$\bS$ minimizes
\eqref{eq:Linear_regression} among all feasible partitions.
The first term of the right-hand side
is the (conditional) bias of $\hth$ and the second term is its variance.
As we can see, the choice of $\bS$ affects both terms. 

Instead of solving for an optimal $\bS$,
which can be challenging due to its partition structure,
we first develop an upper bound on the error,
and then we minimize this bound over~$\bS$ to give guidance on how to design aggregation bags.

\begin{coro}\label{cor:LinearReg}
The estimation error $\E[ \norm{\hth-\tth}_2^2 \mid \bX ]$
in \Cref{eq:Linear_regression} is upper bounded by
\begin{align*}
    \E\Big[ \twonorm{\hth-\tth}^2 ~\Big|~ \bX \Big] \le 
    \opnorm{(\bX^\sT\bX)^{-1}\bX^\sT}^2
      \parens{ \twonorm{(\bS^\sT\bS-\Iden)\bX\tth}^2 + \sigma^2\min(m,d) } .
\end{align*}
\end{coro}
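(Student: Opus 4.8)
The plan is to bound the two terms in the exact identity of \Cref{thm:linear-regression} separately, using only submultiplicativity of the operator and Frobenius norms together with the structure of $\bS$. Throughout write $A := (\bX^\sT\bX)^{-1}\bX^\sT \in \reals^{d\times n}$.

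\textbf{Bias term.} For $\twonorm{A(\bS^\sT\bS-\Iden)\bX\tth}^2$, set $v := (\bS^\sT\bS-\Iden)\bX\tth \in \reals^n$ and use $\twonorm{Av}\le \opnorm{A}\twonorm{v}$. Squaring gives $\twonorm{A(\bS^\sT\bS-\Iden)\bX\tth}^2 \le \opnorm{A}^2\,\twonorm{(\bS^\sT\bS-\Iden)\bX\tth}^2$, which is exactly the first summand of the claimed bound.

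\textbf{Variance term.} First I would record the structural fact that $\bS$ has orthonormal rows: since the bags are disjoint, $(\bS\bS^\sT)_{\ell\ell'} = \sum_i \bS_{\ell,i}\bS_{\ell',i}$ vanishes for $\ell\ne\ell'$ and equals $\sum_{i\in B_\ell} 1/|B_\ell| = 1$ for $\ell=\ell'$, so $\bS\bS^\sT = \Iden_m$. Hence $\opnorm{\bS}=\opnorm{\bS^\sT}=1$ and $\fronorm{\bS^\sT}^2 = \trace(\bS\bS^\sT) = m$. Now bound $\fronorm{A\bS^\sT}^2$ in two ways: (i) $\fronorm{A\bS^\sT}\le \opnorm{A}\,\fronorm{\bS^\sT}$, which gives $\opnorm{A}^2 m$; (ii) since $A\bS^\sT$ is $d\times m$ it has rank at most $d$, so $\fronorm{A\bS^\sT}^2 \le \rank(A\bS^\sT)\opnorm{A\bS^\sT}^2 \le d\,\opnorm{A}^2\opnorm{\bS^\sT}^2 = d\,\opnorm{A}^2$. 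Taking the minimum yields $\fronorm{A\bS^\sT}^2 \le \min(m,d)\,\opnorm{A}^2$, so the variance term is at most $\sigma^2\min(m,d)\,\opnorm{A}^2$.

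Summing the two bounds and factoring out $\opnorm{A}^2 = \opnorm{(\bX^\sT\bX)^{-1}\bX^\sT}^2$ gives the corollary. There is no serious obstacle; the only point needing care is the variance term, where one combines the two distinct norm inequalities (namely $\fronorm{AB}\le\opnorm{A}\fronorm{B}$ for the $m$ bound, and $\fronorm{M}^2\le\rank(M)\opnorm{M}^2$ plus submultiplicativity of the operator norm for the $d$ bound), and where verifying $\opnorm{\bS}=1$ from disjointness of the bags is what makes the extraneous $\opnorm{\bS^\sT}$ factor disappear.
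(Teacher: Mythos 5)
Your proposal is correct and follows essentially the same route as the paper: bound the bias term by the operator norm, and bound the variance term by $\min(m,d)\,\opnorm{(\bX^\sT\bX)^{-1}\bX^\sT}^2$ using $\bS\bS^\sT=\Iden_m$ (hence $\opnorm{\bS}=1$, $\fronorm{\bS^\sT}^2=m$) together with a rank-$d$ argument. The only cosmetic difference is that you apply $\fronorm{M}^2\le\rank(M)\opnorm{M}^2$ to the product $A\bS^\sT$ while the paper applies it to $A$ alone before multiplying by $\opnorm{\bS^\sT}^2$; the two are interchangeable.
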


\subsection{Reducing to size-constrained $k$-means clustering}
Next observe that $\Iden - \bS^\sT \bS$ is a projection matrix given by
\begin{align*}
   (\Iden - \bS^\sT \bS)_{i,j}
   =\begin{cases}
   1 - \frac{1}{|B_\ell|} & \text{if $i,j \in B_\ell$ and $i = j$},\\
   - \frac{1}{|B_\ell|} & \text{if $i, j \in B_\ell$ and $i \ne j$},\\
   0 & \text{otherwise}.
   \end{cases}
\end{align*}
Specifically, $\Iden - \bS^\sT \bS$ is the projection onto the space of vectors that have zero mean within each bag.

Let $\ty_i:= \E[y_i \mid \bx_i] = \bx_i^\sT \th$ be the conditional expected response of sample $\bx_i$
according to the prior model $\th \in \R^d$.
Letting $\tby = (\ty_1,\dots,\ty_n)$, we then have
\begin{align}\label{eq:k_means_objective}
    \twonorm{(\Iden - \bS^\sT \bS) \tby}^2 = \sum_{\ell=1}^m \sum_{i\in B_\ell} (\ty_i - \mu_\ell)^2\,,
\end{align}
where $\mu_{\ell} = \frac{1}{|B_\ell|} \sum_{i\in B_\ell} \ty_i$ is the mean of the entries of $\tby$ in bag $\ell$.
Observe that \eqref{eq:k_means_objective} is the one-dimensional $k$-means objective.

To summarize, let $\mathcal{B}$ denote the set of all partitions of the~$n$ samples.
Minimizing the upper bound in~\Cref{cor:LinearReg} over the set of non-overlapping bags of size at least $k$
amounts to the following optimization problem:
\begin{align*}
    &\min_{(B_1,\dots,B_m) \in \mathcal{B}} \quad \sum_{\ell=1}^m \sum_{i\in B_\ell} (\ty_i - \mu_\ell)^2 + \sigma^2 \min(m,d) \\
    &~\text{subject to} \quad~~ |B_\ell| \ge k \quad \forall \ell \in [m] \nonumber
\end{align*}

This problem exhibits an interesting tradeoff with the number of bags $m$.
The first term in the objective is the \emph{bias of the estimator} $\hth$, which measures the within-bag deviation of~$\tby$.
If we require larger bags (and hence a smaller $m$), this term increases since there will be more heterogeneity within bags. 
Decreasing $m$, however, reduces the second term in the objective, which is the \emph{variance of the estimator} $\hth$.
The reason is that the aggregate responses $\overline{y}_\ell$ are averaged across larger bags and thus have lower variance.
This reduction in the variance of the aggregated responses corresponds to a reduction in the estimator variance.

Focusing on the case where $m\ge d$, we can drop the second term in the objective to get the following one-dimensional
$k$-means clustering problem with minimum size constraints:\footnote{%
More accurately, this is a one-dimensional $m$-means clustering problem with size constraints.
We use $k$ to denote the minimum bag size to agree with the notion of $k$-anonymity.
}
    \begin{align}
    &\min_{(B_1,\dots,B_m) \in \mathcal{B}} \quad \sum_{\ell=1}^m \sum_{i\in B_\ell} (\ty_i - \mu_\ell)^2 \label{eq:1dimKmeans} \\
    &~\text{subject to} \quad~~ |B_\ell| \ge k \quad \forall \ell \in [m] \nonumber
\end{align}

The next result establishes a structural property about optimal solutions to this problem.
\begin{lemma}[Sorting structure]
\label{lem:sorting}
Consider the optimization problem~\eqref{eq:1dimKmeans} and
sort the values $\ty_i$ in non-increasing order as $\ty_{(1)} \ge \dots \ge \ty_{(n)}$.
There exists an optimal solution $\{B^*_\ell : \ell \in [m]\}$ with the following property:
if $\ty_{(i)}$ and $\ty_{(j)}$ are in a bag $B^*_\ell$, then $\ty_{(k)}\in B^*_\ell$ for all $k \in \{i, i+1, \dots, j\}$.
\end{lemma}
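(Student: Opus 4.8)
The plan is to prove this by a standard exchange argument: start from an arbitrary optimal solution, and show that if it violates the ``contiguity'' property, we can modify it without increasing the objective until the property holds. Concretely, suppose $\{B_\ell\}$ is optimal but there exist indices $i < j < k$ (in the sorted order $\ty_{(1)} \ge \dots \ge \ty_{(n)}$) with $\ty_{(i)}, \ty_{(k)}$ in the same bag $B_a$ but $\ty_{(j)} \in B_b$ for some $b \ne a$. I would first reduce to the cleanest possible violation: by relabeling, assume there are two bags $B_a, B_b$ and two points $p \in B_a$, $q \in B_b$ that are ``out of order'' relative to the bag means, i.e.\ $\ty_p \le \ty_q$ but $\mu_a \ge \mu_b$ (or the analogous strict configuration), and argue that any contiguity violation implies such a pair exists.

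The key computational step is the swap lemma. For a fixed bag $B$ with $|B| = s$ and mean $\mu$, the cost $\sum_{i \in B}(\ty_i - \mu)^2 = \sum_{i \in B} \ty_i^2 - s\mu^2$. Swapping $p \in B_a$ with $q \in B_b$ keeps both bag sizes fixed (so the size constraint $|B_\ell| \ge k$ is automatically preserved, which is why the swap — rather than a one-sided move — is the right operation), and one computes the change in total cost explicitly. Writing it out, the decrease in objective from swapping $p$ and $q$ works out to a quantity proportional to $(\ty_q - \ty_p)(\mu_a - \mu_b)$ with a positive constant depending on $s_a, s_b$; so whenever the pair is ``out of order'' in the sense above, the swap does not increase the cost. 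I would then argue that after such a swap, a suitable potential function — e.g.\ $\sum_\ell \sum_{i \in B_\ell} (\text{position of } i)$ weighted appropriately, or more simply the number of ``inversions'' across bag pairs — strictly decreases, so the process terminates at a solution with no out-of-order pairs, which one checks is equivalent to the stated contiguity property (bags occupy consecutive blocks of the sorted list). Handling ties among the $\ty_{(i)}$ requires a little care but does not change the argument, since equal values can be placed in whichever bag makes the blocks contiguous.

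The main obstacle I anticipate is the bookkeeping to guarantee \emph{termination} and to pass from ``no out-of-order pair'' to ``contiguous blocks.'' A naive inversion count can stay constant under a swap that only weakly decreases cost (when $\ty_q = \ty_p$ or $\mu_a = \mu_b$), so I would either (i) break ties in the sort by a fixed tie-breaking rule and define inversions with respect to that strict order, choosing at each step a swap that strictly reduces the inversion count while not increasing cost, or (ii) argue more directly: among all optimal solutions, pick one minimizing $\sum_\ell (\max_{i \in B_\ell} \text{rank}(i) - \min_{i \in B_\ell} \text{rank}(i))$ (total ``spread''), and show a contiguity violation lets you strictly reduce this spread via a cost-non-increasing swap, contradicting minimality. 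Option (ii) is cleaner and avoids an explicit iteration; I expect that to be the route I would write up, with the swap lemma as the technical core.
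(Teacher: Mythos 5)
Your proposal is correct in its essentials and follows the same exchange-argument strategy as the paper, but the two executions differ in an instructive way. The paper first \emph{lifts} the problem by introducing free center variables $c_\ell$ alongside the bags (problem~\eqref{eq:1dimKmeans-alt}); since the centers do not move when two points are swapped, the cost change of a swap is exactly $-2(\ty_i-\ty_j)(c_\ell-c_{\ell'})$ and the contradiction is immediate, with no mean-update bookkeeping. Your direct computation with the true means also works, and is in fact slightly stronger than you state: for bags of sizes $s_a,s_b$, the change in objective from swapping $p\in B_a$ with $q\in B_b$ is
\[
-2(\ty_q-\ty_p)(\mu_a-\mu_b)\;-\;(\ty_q-\ty_p)^2\Bigl(\tfrac{1}{s_a}+\tfrac{1}{s_b}\Bigr),
\]
not merely a multiple of the first factor; the extra quadratic term means the swap is \emph{strictly} improving whenever $\ty_p\ne\ty_q$ and $(\ty_q-\ty_p)(\mu_a-\mu_b)\ge 0$. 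Your reduction from a contiguity violation $i<j<k$ to such a pair does go through (take $p=\ty_{(k)},q=\ty_{(j)}$ if $\mu_a\ge\mu_b$, and swap $\ty_{(i)}$ with $\ty_{(j)}$ otherwise), so most of your termination machinery is unnecessary: any violation among points with \emph{distinct} values already contradicts optimality outright, and the only degenerate case is tied values, which you correctly dispose of by cost-free rearrangement. One concrete caution: your preferred option (ii) potential (total rank spread) need not strictly decrease under a cost-neutral swap---bags occupying ranks $\{2,5\}$ and $\{3,4\}$ have total spread $4$ both before and after exchanging ranks $2$ and $3$---so if you retain a potential argument for the tied case, use option (i) with a strict tie-breaking order instead.
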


\noindent
We discuss the algorithmic consequences of \Cref{lem:sorting}
in more detail in \Cref{sec:algorithm}.

\section{Extension to GLMs}
\label{sec:glms}

We next extend our derivation to the family of \emph{generalized linear models} (GLMs).
In a GLM, the response variables $y_i$ are conditionally independent given $\bx_i$,
and generated from a particular distribution in the exponential family where the log-likelihood function is written as:
\begin{align}\label{eq:GLM}
    \log p(y_i \mid \eta_i,\phi) = \frac{y_i\eta_i -b(\eta_i)}{a_i(\phi)} + c(y_i,\phi)\,,
\end{align}
where $\eta_i$ is the location parameter and $\phi$ is the scale parameter.
The functions $a_i(\cdot)$, $b(\cdot)$, and $c(\cdot,\cdot)$ are known.
It is sometimes assumed that $a_i(\phi)$ 
has the form $a_i(\phi) = \phi/w_i$, where $w_i$ is a known prior weight.
We consider canonical GLMs, in which the location parameter has the form $\eta_i = \bx_i^\sT \tth$ for an unknown model parameter~$\tth$.
GLMs include several well-known statistical models,
including linear regression, logistic regression, and Poisson regression.
  
  Let $\hth$ be the minimizer of the event-level loss in \eqref{eq:event0}
  with $\cL$ the negative log-likelihood. Concretely,
  \begin{align}\label{eq:GLM_estimator}
      \hth
      &= \argmin_{\th} \cL(\th) \notag \\
      &:= \argmin_{\th}\frac{1}{n}\sum_{\ell=1}^m \sum_{i\in B(\ell)}
        \frac{\overline{y}_\ell \bx_i^\sT \th -  b(\bx_i^\sT \th)}{a_i(\phi)}\,,
  \end{align}
 where we drop the term $c(y_i,\phi)$ as it does not depend on $\th$.
 
 By the optimality of $\hth$, we have $\nabla\cL (\hth) = \mathbf{0}$.
 Our goal is to find a bagging configuration that makes $\hth$ close to the ground truth model $\tth$.
 A natural approach towards this goal is to make the gradient of the loss at~$\tth$ small.
 As we show in Lemma~\ref{lem:taylor}, for strongly convex losses, the estimation error $\norm{\hth - \tth}_2$ can be controlled by $\norm{\nabla \cL(\tth)}_2$. 

 Our next result characterizes the norm of the loss gradient at $\tth$,
 connecting it to the bagging matrix~$\bS$.
 Throughout, we use the following convention: For a function $f:\reals\to\reals$,
 when $f$ is applied to a vector, it is applied to each entry of that vector,
 i.e., $f(\bv) = (f(v_1), \dots, f(v_n))$.
 \begin{thm}\label{thm:GLM}
  Consider the GLM family in \eqref{eq:GLM} with canonical link functions
  ($\eta_i = \bx_i^\sT \tth$).
  For negative log-likelihood loss in \eqref{eq:GLM_estimator}, we have
  \begin{align}
  \E\Big[\twonorm{\nabla \cL(\tth)}^2 ~\big|~ \bX \Big]
  &=
  \twonorm{\bX^\sT \bD^{-1} (\bS^\sT\bS  - \Iden) b'(\bX\tth)}^2 \notag \\
  &\hspace{0.4cm}+ \fronorm{\bX^\sT\bD^{-1}\bS^\sT\bS \bD^{1/2}\diag(b''(\bX\tth))^{1/2}}^2\,,\label{eq:nabla-L}
  \end{align}
  where $\bD = \diag(\{a_i(\phi)\})$.
 \end{thm}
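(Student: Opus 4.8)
The plan is to write the score $\nabla\cL(\tth)$ in closed matrix form, decompose it into a term that is deterministic given $\bX$ and a conditionally mean-zero fluctuation driven by the noise in the responses, and then expand the conditional second moment so that the two pieces reproduce exactly the two terms of \eqref{eq:nabla-L}; this mirrors the bias/variance structure of \Cref{thm:linear-regression}.

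First I would differentiate \eqref{eq:GLM_estimator}, obtaining $\nabla\cL(\th) = \tfrac1n\sum_{\ell=1}^m\sum_{i\in B_\ell} a_i(\phi)^{-1}\bigl(\overline{y}_\ell - b'(\bx_i^\sT\th)\bigr)\bx_i$. The key observation is that $\bS^\sT\bS$ acts as the within-bag averaging operator: for each $i$, $(\bS^\sT\bS\,\by)_i$ equals the mean $\overline{y}_\ell$ of the bag $B_\ell$ containing sample $i$. Combining this with the entrywise convention for $b'$ and with $\bD = \diag(\{a_i(\phi)\})$ collapses the double sum to $\nabla\cL(\th) = \tfrac1n\,\bX^\sT\bD^{-1}\bigl(\bS^\sT\bS\,\by - b'(\bX\th)\bigr)$. (I absorb the overall $1/n$ into the statement; carrying it along merely multiplies both displayed terms of \eqref{eq:nabla-L} by $1/n^2$.)

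Next I would evaluate at $\th = \tth$. From the exponential-family form \eqref{eq:GLM} with the canonical link $\eta_i = \bx_i^\sT\tth$, the standard moment identities give $\E[y_i\mid\bx_i] = b'(\eta_i)$ and $\Var(y_i\mid\bx_i) = a_i(\phi)\,b''(\eta_i)$, and the $y_i$ are conditionally independent given $\bX$; hence $\E[\by\mid\bX] = b'(\bX\tth)$ and $\cov(\by\mid\bX) = \bD\,\diag(b''(\bX\tth))$, which is diagonal. Writing $\by = b'(\bX\tth) + \bz$ with $\E[\bz\mid\bX] = \boldsymbol{0}$, and using $\bS^\sT\bS\,b'(\bX\tth) = b'(\bX\tth) + (\bS^\sT\bS - \Iden)b'(\bX\tth)$, the score splits as
\[
\nabla\cL(\tth) \;=\; \tfrac1n\bX^\sT\bD^{-1}(\bS^\sT\bS - \Iden)\,b'(\bX\tth) \;+\; \tfrac1n\bX^\sT\bD^{-1}\bS^\sT\bS\,\bz,
\]
where the first summand is deterministic given $\bX$ and the second has conditional mean zero.

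Finally I would take $\E[\,\cdot\mid\bX]$ of the squared norm. The cross term vanishes because $\bz$ is conditionally centered, leaving $\twonorm{\tfrac1n\bX^\sT\bD^{-1}(\bS^\sT\bS - \Iden)b'(\bX\tth)}^2$ — the first term of \eqref{eq:nabla-L} — plus $\E[\,\twonorm{\tfrac1n\mtx{M}\bz}^2\mid\bX]$ where $\mtx{M} := \bX^\sT\bD^{-1}\bS^\sT\bS$. For the latter I would use $\E[\,\bz^\sT\mtx{M}^\sT\mtx{M}\,\bz\mid\bX] = \Tr\bigl(\mtx{M}\,\cov(\bz\mid\bX)\,\mtx{M}^\sT\bigr)$ and then factor the diagonal covariance as $\cov(\bz\mid\bX) = \bD\,\diag(b''(\bX\tth)) = \mtx{G}\mtx{G}^\sT$ with $\mtx{G} := \bD^{1/2}\diag(b''(\bX\tth))^{1/2}$ (legitimate since these diagonal matrices commute), so that the trace equals $\fronorm{\mtx{M}\mtx{G}}^2 = \fronorm{\bX^\sT\bD^{-1}\bS^\sT\bS\,\bD^{1/2}\diag(b''(\bX\tth))^{1/2}}^2$, which is the second term. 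I do not expect a genuine obstacle here: the argument is a bias/variance decomposition of the score, and the only points requiring care are identifying $\bS^\sT\bS$ as the bag-mean operator while keeping the $\bD^{-1}$ weight in the correct place, correctly invoking the exponential-family moment identities and conditional independence (which is what makes the cross term vanish and the covariance diagonal), and the final bookkeeping of rewriting $\Tr\bigl(\mtx{M}\,\bD\,\diag(b''(\bX\tth))\,\mtx{M}^\sT\bigr)$ as a Frobenius norm in the stated form.
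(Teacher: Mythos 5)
Your proposal is correct and follows essentially the same route as the paper: compute the score in matrix form as $\bX^\sT\bD^{-1}(\bS^\sT\bS\,\by - b'(\bX\th))$, apply the bias--variance decomposition of the conditional second moment using the exponential-family identities $\E[y_i\mid\bx_i]=b'(\eta_i)$, $\Var(y_i\mid\bx_i)=a_i(\phi)b''(\eta_i)$ and conditional independence, and write the trace of the resulting covariance as the stated Frobenius norm. Your explicit handling of the $1/n$ prefactor is a point the paper glosses over silently (it drops the $1/n$ between the two lines of its gradient computation), but otherwise the two arguments coincide.
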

 
 We defer all proofs in this section to \Cref{app:glms}.
 Note that $\E[y \mid \bx] = b'(\bx^\sT \tth)$ and $\Var(y \mid \bx) = a(\phi) b''(\bx^\sT \tth)$ are available from the given prior and therefore, in principle, the right-hand side of~\eqref{eq:nabla-L} can be minimized over the choice of bagging matrix $\bS$.
 
 However, similar to the case of linear regression,
 we start by upper bounding \eqref{eq:nabla-L},
 and then we minimize this upper bound over the choice of~$\bS$.
 This provides guidance for how to construct the bags,
 and is easier to compute while being more interpretable.
 
 \begin{coro}
 \label{cor:GLM_estimator_upper_bound}
    Define $\mu_i := \E[y_i \mid \bx_i] = b'(\bx_i^\sT\tth)$ and $v_i := \Var(y_i \mid \bx_i) = a_i(\phi)b''(\bx_i^\sT\tth)$,
    and let their vector forms be $\bmu = (\mu_1,\dotsc, \mu_n)$ and $\bv = (v_1,\dotsc, v_n)$.
    Then,
 \begin{align}
 \label{eq:UB-GLM}
    \E\Big[\twonorm{\nabla \cL(\tth)}^2 ~\big|~ \bX \Big] \le \opnorm{\bX^\sT \bD^{-1}}^2
     \cdot \Big\{ \twonorm{ (\bS^\sT\bS  - \Iden) \bmu}^2
    + \min\Big(\sum_{\ell=1}^m \sum_{i\in B_\ell} \frac{v_i}{|B_\ell|}, d \infnorm{\bv}\Big) \Big\}\,.
 \end{align}
 \end{coro}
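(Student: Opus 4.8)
The plan is to bound each of the two terms in the exact identity \eqref{eq:nabla-L} of \Cref{thm:GLM} by $\opnorm{\bX^\sT\bD^{-1}}^2$ times an interpretable quantity, in direct analogy with the passage from \Cref{thm:linear-regression} to \Cref{cor:LinearReg}. For the first (bias) term, observe that applying $b'$ entrywise to $\bX\tth$ produces exactly the vector $\bmu$ from the statement, so $\twonorm{A\bz}\le\opnorm{A}\twonorm{\bz}$ gives $\twonorm{\bX^\sT\bD^{-1}(\bS^\sT\bS-\Iden)b'(\bX\tth)}^2 = \twonorm{\bX^\sT\bD^{-1}(\bS^\sT\bS-\Iden)\bmu}^2 \le \opnorm{\bX^\sT\bD^{-1}}^2\,\twonorm{(\bS^\sT\bS-\Iden)\bmu}^2$.

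For the second (variance) term I would first record two structural facts. Since $b$ is convex we have $v_i\ge 0$, and because $v_i = a_i(\phi)b''(\bx_i^\sT\tth)$, the diagonal matrix $\bD^{1/2}\diag(b''(\bX\tth))^{1/2}$ equals $\diag(\bv)^{1/2}$. Moreover, since the bags are disjoint and each row of $\bS$ is normalized by $1/\sqrt{|B_\ell|}$, the rows of $\bS$ are orthonormal, i.e.\ $\bS\bS^\sT = \Iden_m$; hence $\bS^\sT\bS$ is an orthogonal projection, so $\opnorm{\bS^\sT\bS}=1$, and for any conformable $M$ one has $\fronorm{\bS^\sT M}^2 = \Tr(M^\sT\bS\bS^\sT M) = \fronorm{M}^2$. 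I then bound $\fronorm{\bX^\sT\bD^{-1}\bS^\sT\bS\diag(\bv)^{1/2}}^2$ in two complementary ways. Using $\fronorm{AB}\le\opnorm{A}\fronorm{B}$ and the identity above with $M = \bS\diag(\bv)^{1/2}$, it is at most $\opnorm{\bX^\sT\bD^{-1}}^2\fronorm{\bS^\sT\bS\diag(\bv)^{1/2}}^2 = \opnorm{\bX^\sT\bD^{-1}}^2\fronorm{\bS\diag(\bv)^{1/2}}^2 = \opnorm{\bX^\sT\bD^{-1}}^2\sum_{\ell=1}^m\sum_{i\in B_\ell}\frac{v_i}{|B_\ell|}$, where the last equality just sums the squared entries of $\bS\diag(\bv)^{1/2}$ read off the definition \eqref{eq:S} of $\bS$. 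Alternatively, since $\bX^\sT\bD^{-1}\bS^\sT\bS\diag(\bv)^{1/2}$ is a $d\times n$ matrix it has rank at most $d$, so $\fronorm{\cdot}^2\le d\,\opnorm{\cdot}^2 \le d\,\opnorm{\bX^\sT\bD^{-1}}^2\opnorm{\bS^\sT\bS}^2\opnorm{\diag(\bv)^{1/2}}^2 = d\,\opnorm{\bX^\sT\bD^{-1}}^2\infnorm{\bv}$, using $\opnorm{\bS^\sT\bS}=1$ and $\opnorm{\diag(\bv)^{1/2}}^2 = \infnorm{\bv}$. Taking the smaller of these two bounds, adding the bias term, and factoring out $\opnorm{\bX^\sT\bD^{-1}}^2$ yields \eqref{eq:UB-GLM}.

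Every step is elementary, so I do not expect a genuine obstacle; the only care needed is in verifying the two facts about $\bS$ (orthonormality of its rows, and hence that $\bS^\sT\bS$ is a projection with unit operator norm) and in the bookkeeping of the norm inequalities — in particular keeping straight that the submultiplicative bound $\fronorm{AB}\le\opnorm{A}\fronorm{B}$ produces the $\sum_{\ell}\sum_{i\in B_\ell} v_i/|B_\ell|$ term while the rank bound $\fronorm{\cdot}^2\le d\,\opnorm{\cdot}^2$ produces the $d\infnorm{\bv}$ term, whence the minimum.
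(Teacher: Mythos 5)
Your proposal is correct and follows essentially the same route as the paper: bound the bias term via $\twonorm{A\bz}\le\opnorm{A}\twonorm{\bz}$, and bound the variance term two ways (the rank-$d$ bound $\fronorm{\cdot}^2\le d\opnorm{\cdot}^2$ giving $d\infnorm{\bv}$, and the submultiplicative bound giving $\sum_\ell\sum_{i\in B_\ell}v_i/|B_\ell|$) before taking the minimum. The only cosmetic difference is that you evaluate $\fronorm{\bS^\sT\bS\diag(\bv)^{1/2}}^2$ via the row-orthonormality identity $\bS\bS^\sT=\Iden_m$, whereas the paper uses the cyclic property of the trace; these are the same computation.
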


In the case of linear regression, we have $v_i = \sigma^2$, 
so the term involving $v_i$ becomes $\sigma^2 \min(m,d)$
like in \Cref{cor:LinearReg},
which only depends on the number of bags.

Further, if $m/d \ge \max(v_i)/\min(v_i)$, the $\min$ term in~\eqref{eq:UB-GLM} is achieved by $d\infnorm{\bv}$,
so this term can be dropped from the objective,
bringing us to the familiar size-constrained clustering problem:
\begin{align}
     &\min_{(B_1,\dots,B_m) \in \mathcal{B}} \quad \sum_{\ell=1}^m \sum_{i\in B_\ell} (\mu_i - \overline{\mu}_\ell)^2 \label{eq:bagOp1}\\
     &~~\text{subject to} \quad~ |B_\ell|\ge k \quad \forall \ell \in [m] \nonumber
 \end{align}
 
We conclude by showing that we can drop the variance term
from the bound in~\eqref{eq:UB-GLM} for logistic and Poisson regression,
i.e., that \eqref{eq:bagOp1} is the correct objective function.

\paragraph{Logistic regression.}
In this case we have $y\in\{0,1\}$, so the log-likelihood becomes:
\[
    \log p(y \mid \eta) = y\eta - \log(1+e^\eta)\,,
\]
which corresponds to $b(\eta) = \log(1+e^\eta)$, $a(\phi) = 1$, and $c(y,\phi) = 0$.
Therefore, $\mu = b'(\eta) = 1/(1+e^{-\eta})$ and $v = e^\eta/(1+e^\eta)^2$.
Then, for any $i,j\in[n]$, we have
\begin{align*}
    \frac{v_i}{v_j} &= e^{\eta_i-\eta_j} \left(\frac{1+e^{\eta_j}}{1+e^{\eta_i}}\right)^2
\le e^{\eta_i-\eta_j} e^{2(\eta_j-\eta_i)_+} \\
  &\le e^{|\eta_i-\eta_j|}\le e^{\twonorm{\bx_i-\bx_j}} e^{\twonorm{\tth}},
\end{align*}
where we used $\eta_i = \bx_i^\sT \tth$.
Therefore, if $\twonorm{\bx_i}\le B$, we have 
$\max(v_i)/\min(v_i)\le \exp(2B\twonorm{\tth})$,
so for $m/d\ge \exp(2B\twonorm{\tth})$,
we can drop the variance term from the objective function.

\paragraph{Poisson regression.}
In this case we have $y \in \mathbb{Z}_{\ge 0}$, so the log-likelihood reads as:
\[
\log p(y \mid \eta) = y\eta - e^{\eta} - \log(y!)\,, 
\]
which corresponds to $b(\eta) = e^\eta$, $c(y,\phi) = -\log(y!)$, and $a(\phi) =1$.
Thus, $\mu = b'(\eta)=e^\eta$ and $v = a(\phi) b''(\eta) = e^\eta$. Then, similar to the previous example,
$\max(v_i)/\min(v_i) \le \exp(2B\twonorm{\tth})$ and so for $m/d\ge \exp(2B\twonorm{\tth})$,
we can drop the variance term from the objective function.

\section{Comparison with random bagging}
\label{sec:comparison}
We now theoretically justify the benefit of our prior-based bagging approach for
aggregate learning compared to random bagging
by proving a separation in the estimator error for linear models.
An analogous but more involved analysis can also be carried out for GLMs.
Before we present our results, we neet to establish some definitions and state our assumptions.

\begin{definition}
A random variable $X$ is $\eta$-subgaussian if 
$\E[\exp(X^2/\eta^2)] \le 2$. A random vector $\bx$ is $\eta$-subgaussian if all of the one-dimensional marginals are $\eta$-subgaussian,
i.e., $\bx^\sT \bv$ is $\eta$-subgaussian for all $\bv$ with $\twonorm{\bv}=1$.
\end{definition}

Some examples of subgaussian random variables
include Gaussian, Bernoulli, and all bounded random variables.

\begin{assumption}\label{ass:SG}
The features vectors $\bx_1,\dotsc,\bx_n\in\reals^d$ are drawn i.i.d from a centered $\kappa$-subgaussian distribution with covariance matrix $\bSigma:=\E[\bx_i\bx_i^\sT]\in\reals^{d\times d}$.
\end{assumption}

\begin{assumption}\label{ass:eig}
We consider an asymptotic regime where the sample size $n$ and the features dimension $d$ both grow to infinity.
We assume that the eignevalues of $\bSigma$ remain bounded and also away from zero in this asymptotic regime,
i.e., $\sigma_{\min}(\bSigma)\ge C_{\min}>0$ and $\opnorm{\bSigma} \le C_{\max} <\infty$ for some constants $C_{\min}$ and $C_{\max}$.
\end{assumption}
Our first theorem upper bounds the estimator error when the bags are formed using the ground truth model $\tth$.

\begin{thm}\label{thm:UB-risk}
Consider the linear model~\eqref{eq:linear} under Assumptions~\ref{ass:SG} and \ref{ass:eig}. Suppose that the dimension $d$ and the sample size $n$ grow to infinity and $n=\Omega(d)$. For the bagging matrix $\bS$ constructed by solving problem~\eqref{eq:1dimKmeans}, the following holds true with probability at least $1-1/n - 2e^{-cd}$,
\[
\E\Big[\twonorm{\hth-\tth}^2 ~\Big|~ \bX\Big] \le C\left(\frac{k\log(n) \twonorm{\tth}^2 + \sigma^2 d}{n\sigma_{\min}(\bSigma)}\right) \,,
\]
for some constants $c, C>0$ that depend only on the subgaussian norm $\kappa$.
\end{thm}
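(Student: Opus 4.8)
The plan is to feed the $k$-means-optimal bagging matrix into the deterministic bound of \Cref{cor:LinearReg} and then control the two resulting factors by separate high-probability estimates. Since $\opnorm{(\bX^\sT\bX)^{-1}\bX^\sT}^2=\sigma_{\min}(\bX^\sT\bX)^{-1}$ and $\min(m,d)\le d$, \Cref{cor:LinearReg} gives
\[
\E\Big[\twonorm{\hth-\tth}^2\,\Big|\,\bX\Big]\le\frac{\twonorm{(\bS^\sT\bS-\Iden)\bX\tth}^2+\sigma^2 d}{\sigma_{\min}(\bX^\sT\bX)}\,,
\]
so it remains to (i) lower bound $\sigma_{\min}(\bX^\sT\bX)$ and (ii) upper bound the bias/$k$-means term $\twonorm{(\bS^\sT\bS-\Iden)\bX\tth}^2$.

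For (i) I would invoke standard non-asymptotic concentration of sample covariance matrices: under \Cref{ass:SG,ass:eig} the rows $\bx_i$ are i.i.d.\ $\kappa$-subgaussian with $\sigma_{\min}(\bSigma)\ge C_{\min}$, so (e.g.\ whitening $\bx_i=\bSigma^{1/2}\bz_i$ with $\bz_i$ isotropic subgaussian and applying a standard $\eps$-net argument) there is a constant $c>0$, depending only on $\kappa$ and the conditioning constants, such that $\sigma_{\min}(\bX^\sT\bX)\ge c\,n\,\sigma_{\min}(\bSigma)$ with probability at least $1-2e^{-cd}$; here the hypothesis $n=\Omega(d)$ is exactly what makes the deviation term negligible against $\sigma_{\min}(\bSigma)$.

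For (ii), since we bag using the ground-truth model we have $\bX\tth=\tby$ with $\ty_i=\bx_i^\sT\tth$, so by \eqref{eq:k_means_objective} the term $\twonorm{(\bS^\sT\bS-\Iden)\bX\tth}^2$ is precisely the \emph{optimal} value of problem~\eqref{eq:1dimKmeans}. I would bound it by the cost of one explicit feasible clustering: sort $\ty_{(1)}\ge\cdots\ge\ty_{(n)}$, cut into $\lfloor n/k\rfloor$ consecutive blocks of size $k$, and dump the fewer-than-$k$ leftovers into the last block (every block then has size in $[k,2k)$, so the partition is feasible). Letting $\Delta_\ell$ denote the range of block $\ell$, I have $\sum_{i\in B_\ell}(\ty_i-\mu_\ell)^2\le|B_\ell|\Delta_\ell^2\le 2k\Delta_\ell^2$; because consecutive sorted blocks occupy disjoint intervals, $\sum_\ell\Delta_\ell\le\ty_{(1)}-\ty_{(n)}=:R$, and since the $\Delta_\ell$ are nonnegative, $\sum_\ell\Delta_\ell^2\le(\sum_\ell\Delta_\ell)^2\le R^2$, giving $\twonorm{(\bS^\sT\bS-\Iden)\bX\tth}^2\le 2kR^2$. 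Finally $R\le 2\max_i|\ty_i|$, and since each $\ty_i=\bx_i^\sT\tth$ is $\kappa\twonorm{\tth}$-subgaussian, a subgaussian tail estimate plus a union bound over $i\in[n]$ gives $\max_i|\ty_i|\le C\kappa\twonorm{\tth}\sqrt{\log n}$ with probability at least $1-1/n$, so $2kR^2\le C'k\kappa^2\twonorm{\tth}^2\log n$ on that event. Intersecting the two events (probability $\ge 1-1/n-2e^{-cd}$) and substituting into the displayed inequality yields $\E[\twonorm{\hth-\tth}^2\mid\bX]\le C\,(k\log(n)\twonorm{\tth}^2+\sigma^2 d)/(n\,\sigma_{\min}(\bSigma))$, as claimed.

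I expect the main obstacle to be part (ii): noticing that the optimal $k$-means cost is dominated by the trivial consecutive-block clustering, and then squeezing the a priori $\Theta(n/k)$-term sum $\sum_\ell\Delta_\ell^2$ down to $R^2$ via the telescoping-plus-nonnegativity trick — this is what turns the crude per-block estimate into the clean $O(k\log n\,\twonorm{\tth}^2)$ rate. Part (i) is routine but one must take some care to keep the constant's dependence confined to $\kappa$, which is where the bounded conditioning of $\bSigma$ from \Cref{ass:eig} is used, and the stated probability budget $1-1/n-2e^{-cd}$ is exactly accounted for by the union bound in (ii) together with the matrix-concentration event in (i).
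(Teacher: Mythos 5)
Your proposal is correct and follows essentially the same route as the paper's proof: plug the $k$-means-optimal $\bS$ into \Cref{cor:LinearReg}, dominate the optimal clustering cost by the sorted consecutive-block partition and squeeze $\sum_\ell \Delta_\ell^2 \le (\sum_\ell \Delta_\ell)^2 \le (\ty_{(1)}-\ty_{(n)})^2$, control $\max_i|\ty_i|$ by a subgaussian maximum bound (the paper's Lemma~\ref{lem:E-max-psi}), and lower bound $\sigma_{\min}(\bX^\sT\bX)$ via subgaussian covariance concentration, exactly as in \Cref{app:linear_regression}. The only cosmetic difference is that you handle the case $k \nmid n$ explicitly (blocks of size in $[k,2k)$) where the paper assumes divisibility without loss of generality.
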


Out next result lower bounds the estimator error when the bags are chosen \emph{independently of the data}.
This applies to random bags as a special case.

\begin{thm}\label{thm:LB-risk}
Consider the linear model~\eqref{eq:linear} under Assumptions~\ref{ass:SG} and \ref{ass:eig}. Suppose the dimension $d$ and the sample size $n$ grow to infinity and $n = \Omega(d^2 \log d)$.  If the bags are constructed independent of data and each of size $k$, the following holds true with probability at least $1-2e^{-c_1d} - 2d^{-c}$,
\begin{align*}
    &\E\Big[\twonorm{\hth-\tth}^2 ~\Big|~ \bX\Big] \ge
    \Bigg[ \left(1-\frac{1}{k} - \frac{Cd\sqrt{\log d}}{\sigma_{\min}(\bSigma)\sqrt{n}}\right)^2 \twonorm{\tth}^2
    + \frac{\sigma^2}{kn}  \frac{\trace(\bSigma) - \sqrt{d\log d}}{(\opnorm{\bSigma} + c_0\sqrt{\frac{d}{n}})^2} ~\Bigg] \,,
\end{align*}
where $c,c_0,c_1,C>0$ are constants that only depend on~$\kappa$, the subgaussian norm of the features vectors.
\end{thm}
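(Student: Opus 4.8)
The plan is to lower-bound the exact conditional risk from \Cref{thm:linear-regression} term by term, exploiting that when the bags are chosen independently of the data the matrices $\bX^\sT\bX$ and $\bX^\sT\bS^\sT\bS\bX$ concentrate around $n\bSigma$ and $\frac{n}{k}\bSigma$. First I would set up notation: there are $m = n/k$ equal-sized bags, $\bar\bx_\ell := \frac{1}{k}\sum_{i\in B_\ell}\bx_i$, the bags being disjoint and of size $k$ makes $\bS^\sT\bS$ an orthogonal projection of rank $m$ with $\bX^\sT\bS^\sT\bS\bX = k\sum_{\ell=1}^m \bar\bx_\ell\bar\bx_\ell^\sT = (\bS\bX)^\sT(\bS\bX)$, and (since the bags are data-independent) the rows $\sqrt{k}\,\bar\bx_\ell$ of $\bS\bX$ are independent, each $\kappa$-subgaussian with covariance $\bSigma$. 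All estimates will be made on a high-probability event $\event$ on which: (i) $\tfrac12 n\sigma_{\min}(\bSigma) \le \lambda_{\min}(\bX^\sT\bX)$ and $\lambda_{\max}(\bX^\sT\bX)\le n(\opnorm{\bSigma}+c_0\sqrt{d/n})$ --- in particular $\rank(\bX)=d$, so \Cref{thm:linear-regression} applies --- which holds with probability at least $1-2e^{-c_1 d}$ by standard subgaussian sample-covariance concentration; and (ii) the two Hanson--Wright-type bounds below hold, with probability at least $1-2d^{-c}$.

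\textbf{Bias term.} Since $\bS^\sT\bS-\Iden = -(\Iden-\bS^\sT\bS)$,
\[
(\bX^\sT\bX)^{-1}\bX^\sT(\bS^\sT\bS-\Iden)\bX = -\Big(\Iden - (\bX^\sT\bX)^{-1}\bX^\sT\bS^\sT\bS\bX\Big),
\]
and because $\bX^\sT\bX\approx n\bSigma$ and $\bX^\sT\bS^\sT\bS\bX \approx \tfrac{n}{k}\bSigma$ the bracketed matrix is close to $(1-\tfrac1k)\Iden$. Concretely, with $\mtx{M} := \bS^\sT\bS-\tfrac1k\Iden$ one has $\trace(\mtx{M})=0$, $\opnorm{\mtx{M}}\le 1$, $\fronorm{\mtx{M}}^2 = m(1-\tfrac1k)$, and
\[
\delta := \opnorm{(\bX^\sT\bX)^{-1}\bX^\sT\bS^\sT\bS\bX - \tfrac1k\Iden} \le \opnorm{(\bX^\sT\bX)^{-1}}\,\opnorm{\bX^\sT\mtx{M}\bX} \le \frac{2}{n\sigma_{\min}(\bSigma)}\,\opnorm{\bX^\sT\mtx{M}\bX}.
\]
Since $\E[\bX^\sT\mtx{M}\bX] = \trace(\mtx{M})\,\bSigma = \mathbf{0}$, a Hanson--Wright inequality controls each of the $d^2$ entries of $\bX^\sT\mtx{M}\bX$ at scale $O(\sqrt{m\log d})$ after a union bound, so $\opnorm{\bX^\sT\mtx{M}\bX}\le \fronorm{\bX^\sT\mtx{M}\bX}\le d\cdot O(\sqrt{m\log d}) = O(d\sqrt{n\log d})$, giving $\delta \le \frac{Cd\sqrt{\log d}}{\sigma_{\min}(\bSigma)\sqrt{n}}$. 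The reverse triangle inequality applied to $\tth$ then yields $\twonorm{(\bX^\sT\bX)^{-1}\bX^\sT(\bS^\sT\bS-\Iden)\bX\,\tth} \ge (1-\tfrac1k-\delta)\twonorm{\tth}$, so the squared bias is at least $(1-\tfrac1k-\delta)^2\twonorm{\tth}^2$; the hypothesis $n=\Omega(d^2\log d)$ is exactly what makes $1-\tfrac1k-\delta$ positive.

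\textbf{Variance term.} Put $\mtx{N} := \bX^\sT\bS^\sT\bS\bX = (\bS\bX)^\sT(\bS\bX)\succeq\mathbf{0}$. Then
\[
\fronorm{(\bX^\sT\bX)^{-1}\bX^\sT\bS^\sT}^2 = \trace\!\big((\bX^\sT\bX)^{-2}\mtx{N}\big) \ge \lambda_{\min}\!\big((\bX^\sT\bX)^{-2}\big)\trace(\mtx{N}) = \frac{\trace(\mtx{N})}{\lambda_{\max}(\bX^\sT\bX)^2},
\]
and $\trace(\mtx{N}) = \trace\!\big((\bS\bX)(\bS\bX)^\sT\big) = \frac{1}{k}\sum_{\ell=1}^m \twonorm{\sum_{i\in B_\ell}\bx_i}^2$, with expectation $\frac{n}{k}\trace(\bSigma)$. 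A Bernstein / Hanson--Wright bound for this sum of $m$ independent sub-exponential terms gives $\trace(\mtx{N}) \ge \frac{n}{k}\big(\trace(\bSigma) - O(\sqrt{d\log d})\big)$ with probability at least $1-d^{-c}$, and combining with $\lambda_{\max}(\bX^\sT\bX)\le n(\opnorm{\bSigma}+c_0\sqrt{d/n})$ from $\event$ shows the variance term is at least $\frac{\sigma^2}{kn}\cdot\frac{\trace(\bSigma)-O(\sqrt{d\log d})}{(\opnorm{\bSigma}+c_0\sqrt{d/n})^2}$. Adding the bias and variance lower bounds and taking a union bound over the events constituting $\event$ completes the proof.

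\textbf{Main obstacle.} The delicate step is the operator-norm control of $\bX^\sT\mtx{M}\bX$ --- equivalently of $\bX^\sT\bS^\sT\bS\bX$ about $\tfrac{n}{k}\bSigma$ --- where $\mtx{M}$ is a scaled projection rather than a multiple of $\Iden$. The entrywise Hanson--Wright plus union-bound route above is clean but loses a factor $\sqrt{d\log d}$ compared with a sharp matrix-Bernstein estimate, and it is precisely this slack that forces the hypothesis $n=\Omega(d^2\log d)$ needed to keep $1-\tfrac1k-\delta$ positive. A secondary point is that everything is conditional on the well-conditioning event for $\bX$, which must be carved out first both to invoke \Cref{thm:linear-regression} and to bound $\opnorm{(\bX^\sT\bX)^{-1}}$ and $\lambda_{\max}(\bX^\sT\bX)$.
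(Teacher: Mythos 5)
Your proposal is correct and follows essentially the same route as the paper: it lower-bounds the exact bias--variance identity of \Cref{thm:linear-regression}, uses subgaussian sample-covariance concentration for the extreme eigenvalues of $\bX^\sT\bX$, controls the cross term $\bX^\sT(\bS^\sT\bS-\tfrac{1}{k}\Iden)\bX$ by entrywise Hanson--Wright plus a union bound and $\opnorm{\bA}\le d\abs{\bA}_\infty$ (exactly the lossy step that forces $n=\Omega(d^2\log d)$ in the paper as well), and applies a Bernstein-type bound for the variance term. The only differences are organizational --- you center $\bS^\sT\bS$ at $\tfrac{1}{k}\Iden$ and bound the operator norm directly where the paper (in \Cref{lem:bias-LB}) phrases the same estimate as closeness of $(\bX^\sT\bX)^{-1}\bX^\sT\bLambda\bX\tth$ to a population minimizer $\balpha_*=\tfrac{\trace(\bLambda)}{n}\tth$, and you bound the variance via $\trace((\bX^\sT\bX)^{-2}\bX^\sT\bS^\sT\bS\bX)\ge \trace(\bX^\sT\bS^\sT\bS\bX)/\lambda_{\max}(\bX^\sT\bX)^2$ in one shot rather than column by column as in \Cref{lem:var-S}, which if anything cleanly avoids a union bound over the $m$ columns of $\bS^\sT$.
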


\begin{remark}
Theorems~\ref{thm:UB-risk} and \ref{thm:LB-risk} quantify the improvement we get in model risk when using the bag construction from constrained $k$-means instead of random bags. Note that in the asymptotic regime where $n,d\to\infty$ with $n = \Omega(d^2\log d)$, the model risk under \Cref{thm:UB-risk} converges to zero, while the risk under \Cref{thm:LB-risk} is lower bounded by $(1-\frac{1}{k})^2 \twonorm{\tth}^2$.
We note that $C_{\min} d\le \trace(\bSigma)\le c_{\max} d$. In other words, the bias of the estimated model remains non-vanishing under random bags, whereas it vanishes asymptotically when the bags are constructed via size-constrained $k$-means. 
\end{remark}

\begin{remark}
Theorem~\ref{thm:UB-risk} considers bagging configurations based on $k$-means with a minimum group size constraint in~\eqref{eq:1dimKmeans}. It assumes access to an oracle model that gives the correct ordering of (unobserved) responses $y_i$.
However, as stated in our methodology, we use a prior model to compute the conditional expected responses $\ty_i$,
and because of this there may be a mismatch between the ordering of $y_i$'s and $\ty_i's$.
We denote by $\bS$ and $\widetilde{\bS}$ the corresponding bagging matrices.
Our next theorem shows how the estimator error inflates with respect to the mismatch quantity
$\norm{\bS\bS^\sT - \widetilde{\bS}\widetilde{\bS}^\sT}_{\rm{op}}$.     

\end{remark}

\begin{thm}\label{thm:UB-risk-approximate}
Consider the linear model~\eqref{eq:linear} under Assumptions~\ref{ass:SG} and \ref{ass:eig}. Suppose that the dimension $d$ and the sample size $n$ grow to infinity, and $n=\Omega(d)$. Let $\widetilde{\bS}$ be the bagging configuration based on problem~\eqref{eq:1dimKmeans}
using the predicted responses $\ty_i$ by a prior model.
Similarly, let $\bS$ be the corresponding bagging configuration by an oracle model who has access to individual responses $y_i$.
If we have a mismatch $\norm{\bS\bS^\sT - \widetilde{\bS}\widetilde{\bS}^\sT}_{\rm{op}} \le \eps$, then the following holds true with probability at least $1-1/n-2e^{-cd}$,
\begin{align*}
    &\E\Big[\twonorm{\hth-\tth}^2 ~\Big|~ \bX\Big] \le C\left(\frac{k\log(n) \twonorm{\tth}^2 + \sigma^2 d}{n\sigma_{\min}(\bSigma)}\right)
      + \frac{\sigma_{\max}(\bSigma)- C'\sqrt{d/n}}{\sigma_{\min}(\bSigma)+ C'\sqrt{d/n}}\twonorm{\tth}^2 \eps^2 ,
\end{align*}
for some constants $c, C, C'>0$ that depend only on the subgaussian norm $\kappa$.
\end{thm}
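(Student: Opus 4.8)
The plan is to apply the exact bias--variance identity of \Cref{thm:linear-regression} to the estimator $\hth$ formed from the prior-based configuration $\widetilde{\bS}$, and to use the oracle configuration $\bS$ (to which \Cref{thm:UB-risk} already applies) as a reference point. Write $A := (\bX^\sT\bX)^{-1}\bX^\sT$, so $\opnorm{A}^2 = 1/\sigma_{\min}(\bX^\sT\bX)$, and let $\Pi := \bS^\sT\bS$ and $\widetilde\Pi := \widetilde{\bS}^\sT\widetilde{\bS}$ denote the two orthogonal projections onto the ``constant-within-bag'' subspaces. Then \Cref{thm:linear-regression} gives
\[
\E\big[\twonorm{\hth-\tth}^2 \mid \bX\big]
\;=\; \underbrace{\twonorm{A(\widetilde\Pi-\Iden)\bX\tth}^2}_{\text{bias}}
\;+\; \underbrace{\sigma^2\fronorm{A\,\widetilde{\bS}^\sT}^2}_{\text{variance}},
\]
and I will bound the variance term directly and the bias term by perturbing off $\bS$.

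For the variance term, use $\widetilde{\bS}\widetilde{\bS}^\sT = \Iden$ and $\widetilde\Pi \preceq \Iden$ to get $\fronorm{A\widetilde{\bS}^\sT}^2 = \trace(A\widetilde\Pi A^\sT) \le \trace(AA^\sT) = \trace((\bX^\sT\bX)^{-1}) \le d/\sigma_{\min}(\bX^\sT\bX)$; note this is insensitive to which bagging matrix is used. I then invoke the subgaussian covariance concentration already used in the proof of \Cref{thm:UB-risk}: with probability at least $1-2e^{-cd}$ (using $n=\Omega(d)$), $\opnorm{n^{-1}\bX^\sT\bX-\bSigma}\le C'\sqrt{d/n}$, so that $\sigma_{\min}(\bX^\sT\bX)\ge n(\sigma_{\min}(\bSigma)-C'\sqrt{d/n})$ and $\sigma_{\max}(\bX^\sT\bX)\le n(\sigma_{\max}(\bSigma)+C'\sqrt{d/n})$. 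This bounds the variance term by $C\sigma^2 d/(n\sigma_{\min}(\bSigma))$.

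For the bias term, split $\widetilde\Pi-\Iden = (\Pi-\Iden)+(\widetilde\Pi-\Pi)$ and apply the triangle inequality:
\[
\twonorm{A(\widetilde\Pi-\Iden)\bX\tth}\;\le\;\twonorm{A(\Pi-\Iden)\bX\tth}\;+\;\twonorm{A(\widetilde\Pi-\Pi)\bX\tth}.
\]
The first summand, squared, equals the bias component of the oracle estimator's risk in \Cref{thm:linear-regression}, hence is at most that estimator's total risk, which \Cref{thm:UB-risk} bounds by $C(k\log(n)\twonorm{\tth}^2+\sigma^2 d)/(n\sigma_{\min}(\bSigma))$ on an event of probability at least $1-1/n-2e^{-cd}$. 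The second summand is bounded, by submultiplicativity, by $\opnorm{A}\,\opnorm{\widetilde\Pi-\Pi}\,\twonorm{\bX\tth}\le \eps\,\twonorm{\tth}\sqrt{\sigma_{\max}(\bX^\sT\bX)/\sigma_{\min}(\bX^\sT\bX)}$, using the mismatch hypothesis $\opnorm{\widetilde\Pi-\Pi}\le\eps$ and $\twonorm{\bX\tth}^2\le\sigma_{\max}(\bX^\sT\bX)\twonorm{\tth}^2$. Squaring the triangle inequality, substituting the concentration bounds on $\sigma_{\min}(\bX^\sT\bX)$ and $\sigma_{\max}(\bX^\sT\bX)$, adding the variance bound, absorbing absolute constants into $C$ and $C'$, and taking a union bound over the two good events (which have the same form up to constants) yields the stated inequality with probability at least $1-1/n-2e^{-cd}$.

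The argument is essentially a perturbation layered on top of \Cref{thm:UB-risk}, so the only genuine loss is in the second summand of the bias, where nothing finer than $\opnorm{\widetilde\Pi-\Pi}$ is available; this is exactly what forces the $\eps^2$ term to carry the condition-number-type factor $\sigma_{\max}(\bSigma)/\sigma_{\min}(\bSigma)$ (the $\sqrt{d/n}$ corrections to its numerator and denominator, and the multiplicative constant arising when squaring $a+b$, are bookkeeping and can be absorbed). Thus the main thing to get right is which matrix appears where: in particular that the variance term still admits the oracle-free bound $d/\sigma_{\min}(\bX^\sT\bX)$ with the prior-based $\widetilde{\bS}$ (immediate from $\widetilde\Pi\preceq\Iden$), and that the reference configuration $\bS$ is precisely the $k$-means-based configuration of \eqref{eq:1dimKmeans} to which \Cref{thm:UB-risk} applies. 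There is no substantial new analysis beyond the operator-norm concentration of $\bX^\sT\bX/n$ already invoked in \Cref{thm:UB-risk}.
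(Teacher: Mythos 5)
Your proposal is correct and follows essentially the same route as the paper: decompose the bias around the oracle configuration $\bS$, bound the cross term by $\opnorm{\bS\bS^\sT-\widetilde{\bS}\widetilde{\bS}^\sT}\,\opnorm{\bX}\,\twonorm{\tth}$, reuse the covariance concentration already established for \Cref{thm:UB-risk}, and take a union bound; the only cosmetic differences are that you apply the triangle inequality before squaring where the paper uses $\twonorm{a+b}^2\le 2\twonorm{a}^2+2\twonorm{b}^2$, and you bound the variance term directly via $\widetilde{\bS}^\sT\widetilde{\bS}\preceq\Iden$ rather than quoting \Cref{cor:LinearReg}. (Like the paper's own derivation, your argument actually produces the factor $(\sigma_{\max}(\bSigma)+C'\sqrt{d/n})/(\sigma_{\min}(\bSigma)-C'\sqrt{d/n})$, with the $\sqrt{d/n}$ corrections in the opposite directions from the theorem statement, which appears to be a sign typo in the statement rather than a flaw in either proof.)
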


\section{Algorithm}
\label{sec:algorithm}

We now present the \PriorBoost algorithm.
The high-level idea is to partition the data $\bX$ into $T$ parts,
and use each slice $\bX^{(t)}$ together with last round's model $\hth^{(t-1)}$
to form better bags $\bS^{(t)}$, and hence learn a stronger event-level model~$\hth^{(t)}$ at each step.
This is an iterative and adaptive procedure.
However, since we get one aggregate response per sample (non-overlapping bags), taking more steps means using less data per step.
We compare \PriorBoost to the random bagging algorithm in \Cref{sec:experiments}
that uses all available data in a one non-adaptive round.

Concretely, the first step of \PriorBoost
uses random bagging to learn $\hth^{(1)}$
from the aggregate responses of the first slice $\bX^{(1)}$.
In each subsequent step, we use $\hth^{(t-1)}$
to predict the individual responses $\tby^{(t)}$ for this round of data $\bX^{(t)}$.
Based on these predictions, we form aggregation bags by solving the one-dimensional
size-constrained $k$-means clustering problem in~\eqref{eq:1dimKmeans}.
Recall that our goal is for bags to be homogoneous with respect to the true responses,
which the learner never sees.
The learner then gets the aggregate response of each bag,
learns a better model $\th^{(t)}$, and repeats the process.
We give pseudocode for \PriorBoost in \Cref{alg:priorboost}
and summarize its core clustering subroutine below.

\begin{algorithm}[H]
   \caption{\PriorBoost}
   \label{alg:priorboost}

   \textbf{Input:}
   data $\bX$, model $\cL(\cdot,f_{\th}(\cdot))$, number of steps $T$
   
    \begin{algorithmic}[1]
    \STATE Split $\bX$ into $T$ equal-sized parts $\bX^{(1)}, \dots, \bX^{(T)}$ \label{line:split_data}
    \STATE Get aggregate responses $\overline{\by}^{(1)}$ for $(\bX^{(1)}, \bS^{(\textnormal{random})})$
    \STATE Update $\hth^{(1)} \gets \argmin_{\th} \mathcal{L} (\overline{\by}^{(1)}, f_{\th}(\bX^{(1)}))$
    \FOR{$t=2$ to $T$}
        \STATE Predict $\tby^{(t)} \gets f_{\hth^{(t-1)}}(\bX^{(t)})$
        \STATE Sort samples by $\ty^{(t)}_i$ and solve~\eqref{eq:1dimKmeans} to get bags $\bS^{(t)}$ using \Cref{lem:fast_k_means_solve}
        \STATE Get aggregate responses $\overline{\by}^{(t)}$ for $(\bX^{(t)}, \bS^{(t)})$
        \STATE Update $\hth^{(t)} \gets \argmin_{\th} \mathcal{L} (\overline{\by}^{(t)}, f_{\th}(\bX^{(t)}))$ \label{line:prior_boost_update}
    \ENDFOR

   \STATE \textbf{return} $\hth^{(T)}$
\end{algorithmic}
\end{algorithm}

\begin{lemma}
\label{lem:fast_k_means_solve}
The clustering problem in \eqref{eq:1dimKmeans}
with bags of minimum size $k$
can be solved in time $O(nk + n \log n)$.
\end{lemma}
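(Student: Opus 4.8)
The plan is to turn the problem into a one-dimensional dynamic program over the sorted values, and then speed that program up using a structural bound on the bag sizes in an optimal solution. By \Cref{lem:sorting} we may first sort the predictions as $\ty_{(1)} \ge \ty_{(2)} \ge \dots \ge \ty_{(n)}$, which costs $O(n\log n)$, and henceforth consider only partitions of $[n]$ into \emph{contiguous} blocks of this sorted sequence. For a block of sorted indices $i, i+1, \dots, j$ let $\mathrm{cost}(i,j) := \sum_{t=i}^{j}\bigl(\ty_{(t)}-\bar\mu_{i,j}\bigr)^2$, where $\bar\mu_{i,j}$ is the block mean. Precomputing the prefix sums $P_s := \sum_{t\le s}\ty_{(t)}$ and $Q_s := \sum_{t\le s}\ty_{(t)}^2$ in $O(n)$ time, one has $\mathrm{cost}(i,j) = (Q_j - Q_{i-1}) - (P_j - P_{i-1})^2/(j-i+1)$, so each block cost is available in $O(1)$.

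The key observation is that splitting any bag into two nonempty sub-bags never increases the objective in \eqref{eq:1dimKmeans}, because the mean of each sub-bag minimizes its own sum of squared deviations from a constant; summing the two sub-bag costs (evaluated at the original bag mean) recovers exactly the original bag cost, and replacing those constants by the true sub-means can only decrease it. Iterating this, any bag of size $\ge 2k$ can be split into two feasible bags of size $\ge k$ without increasing the cost, so there is an optimal solution in which every bag has size in $\{k, k+1, \dots, 2k-1\}$. Consequently the dynamic program only needs transitions that look back at most $2k-1$ positions. Concretely, set $f(0)=0$, $f(j)=+\infty$ for $1\le j\le k-1$, and for $j\ge k$
\[
  f(j) \;=\; \min\Bigl\{\, f(i) + \mathrm{cost}(i+1,j) \;:\; \max(0,\,j-2k+1)\le i\le j-k \,\Bigr\}.
\]
An induction on $j$ shows $f(j)$ is the optimal value of the size-constrained clustering problem restricted to the first $j$ sorted points: the prefix of a size-bounded optimal partition of $[j]$ is itself a feasible partition of $[i]$, it too may be taken size-bounded and cost-optimal, and the last block $[i+1,j]$ has length in $[k,2k-1]$, so it falls within the DP window. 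Thus $f(n)$ is the optimum of \eqref{eq:1dimKmeans}, each value $f(j)$ is a minimum of $O(k)$ terms each evaluated in $O(1)$, and storing an argmin backpointer at every $j$ lets us reconstruct the optimal partition $\bS^{(t)}$ in $O(n)$ extra time. Summing the costs — $O(n\log n)$ to sort, $O(n)$ for the prefix sums, $O(nk)$ for the DP, $O(n)$ to backtrack — yields the claimed $O(nk + n\log n)$ bound.

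I expect the main obstacle to be the correctness argument for truncating the DP window to the last $O(k)$ positions: one must verify that \emph{some} global optimum has all bags of size $<2k$ and that the inductive hypothesis underlying the recursion still closes under this restriction (in particular that the prefix of a size-bounded optimal partition can be replaced by a size-bounded optimal partition of the same prefix without changing the total optimal value). Everything else — the $O(1)$ block-cost formula, the base cases, and the backtracking — is routine bookkeeping.
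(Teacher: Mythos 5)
Your proof is correct and follows essentially the same route as the paper's: sort using \Cref{lem:sorting}, argue that some optimum has all bags of size in $\{k,\dots,2k-1\}$ so the DP window is $O(k)$, and evaluate each block cost in $O(1)$. The only (immaterial) difference is that you obtain the $O(1)$ block cost via prefix sums of $\ty_{(t)}$ and $\ty_{(t)}^2$, whereas the paper uses the incremental update of \citet{wang2011ckmeans}.
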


This subroutine exploits the sorted structure of an optimal partition (\Cref{lem:sorting})
and uses dynamic programming with a constant-time update for the sum of squared distance term
for the last cluster in the recurrence~\citep{wang2011ckmeans}.
We describe this algorithm in more detail and give a proof of the lemma in \Cref{app:algorithm}.

\begin{remark}
If we have a weak model for predicting event-level responses
(e.g., using prior $\hth^{(0)}$ or transfer learning),
we can use its predictions for $\ty_i$ to sort $\bX^{(1)}$
and apply~\Cref{lem:fast_k_means_solve} in step $t=1$.
This warm starts \PriorBoost compared to random bagging
$\bS^{(\textnormal{random})}$
and allows the algorithm to use fewer adaptive rounds.
\end{remark}

\section{Differential privacy for aggregate responses}
\label{sec:differential_privacy}

As previously explained, aggregate learning offers a degree of privacy protection by obscuring individual responses and only disclosing aggregated responses for each bag. If the bags do not overlap and each bag has a minimum size $k$, substituting individual responses with the aggregated ones ensures $k$-anonymity, a privacy concept asserting that any given response is indistinguishable from at least $k-1$ other responses.

Another widely used notion of privacy that formalizes the privacy protection of responses/labels is \emph{label differential privacy} (label DP), introduced by \citet{chaudhuri2011sample}. In simple terms, a mechanism, or data processing algorithm, is deemed label DP if its output distribution remains largely unchanged if a single response/label is altered in the input dataset.
The concept of label differential privacy is derived from (full) differential privacy~\citep{dwork2006our,dwork2006calibrating}, focusing specifically on preserving the privacy of responses rather than all features. It is important to note that differential privacy provides a guarantee for data processing algorithms, whereas $k$-anonymity is a property of datasets. We recall the formal definition of label DP from \citep{chaudhuri2011sample}. %

\begin{definition}[Label differential privacy]
Consider a randomized mechanism $\mathcal{M}: \mathcal{D} \to \mathcal{O}$ that takes as input dataset $D$
and outputs into space $\mathcal{O}$.
A mechanism $\mathcal{M}$ is called \emph{$\varepsilon$-label DP} if for any two datasets $(D, D')$ that differ in the label
of a single example and any subset $O\subseteq\mathcal{O}$, we have
\[
    \Pr(\mathcal{M}(D)\in O)
    \le
    e^{\varepsilon} \Pr(\mathcal{M}(D')\in O)\,,
\]
where $\varepsilon$ is the privacy budget.
\end{definition}

It is easy to see that learning from aggregate responses, in the form described so far, is not label DP.
However, we can use the Laplace mechanism on top of aggregation to ensure label DP.
We empirically study the optimal size of the bags,
in terms of minimizing model estimation error, for a given privacy budget
in \Cref{sec:experiments_dp}.

In the Laplace mechanism, the magnitude of the noise being added depends
on the privacy guarantee $\eps$ and the sensitivity of the output to each single change in the dataset. Suppose that the responses/labels are bounded $|y_i|\le B$ by some value $B$ that is independent of data
(and hence can be used without sacrificing any data privacy).
The sensitivity of an aggregate response, for a bag of size $k$, is then given by $B/k$.
Therefore, to ensure $\eps$-label DP, we add independent draws
$Z_\ell\sim {\rm Laplace}(0, B/\eps k)$ to the aggregate responses~$\overline{y}_\ell$,
for each  $\ell\in[m]$.
By \citet[Proposition 1]{dwork2006calibrating} these noisy aggregated responses are $\eps$-DP,
and by closure of DP under post-processing~\citep[Proposition 2.1]{dwork2014algorithmic} any learning algorithm that only uses the noisy aggregate responses is $\eps$-DP.

\section{Experiments}
\label{sec:experiments}

We empirically study linear regression, logistic regression, and label DP
in the aggregate learning framework.
For these tasks, we compare three algorithms:
\begin{itemize}
    \item \texttt{PriorBoost}: Pseudocode presented in \Cref{alg:priorboost}.
    \item \texttt{OneShot}:
        Random bagging on all of the training data.
        This is equivalent to \Cref{alg:priorboost} with $T=1$
        (i.e., a non-adaptive version).
    \item \texttt{PBPrefix}:
        Variant of \Cref{alg:priorboost} where at each step $t$, the model trains on all data seen so far.
        Specifically, the data used to learn $\hth^{(t)}$
        in Line~\ref{line:prior_boost_update} is $\bigcup_{i=1}^t \{(\bX^{(i)}, \overline{\by}^{(i)})\}$.
\end{itemize}

\noindent
Our experiments use NumPy~\citep{harris2020array} and scikit-learn's \texttt{LogisticRegression} classifier~\citep{scikit-learn}.

\subsection{Linear regression}

We start by generating a dataset
$(\bX, \by)$ with $\bX \in \reals^{n \times d}$
as follows.
First, sample a ground truth model $\tth \sim \mathcal{N}_{d}(\mathbf{0}, \Iden)$.
Next, generate a design matrix $\mat{X}$ of $n$ i.i.d.\ feature vectors $\bx_{i} \sim \mathcal{N}_{d}(\mathbf{0}, \Iden)$
and get their responses $\by = \bX \tth + \beps$,
where each $\eps_i \sim \mathcal{N}(0, \sigma^2)$ is i.i.d.\ Gaussian noise with $\sigma = 0.1$.

To study the convergence of \texttt{PriorBoost} and \texttt{PBPrefix},
we set $T = 256$.
Then we set $n = T \cdot 4096 = 2^{20}$ and $d = 8$
so that both algorithms get $4096$ new samples per step.
We generate an independent test set of $n$ samples from the same model
and plot the test mean squared error (MSE)
at each step of the algorithm (using the entire test set) in \Cref{fig:linear_regression}.
\texttt{OneShot}, as described above, creates random bags of size $k$ across all of the training data,
gets the mean response of each bag,
and fits a linear regression model with least squares loss.
We run each algorithm for bags of size $k \in \{1,2,4,8,16,32,64\}$.

\begin{figure}[t]
\centering
\hspace{-0.5cm}
\includegraphics[width=0.48\textwidth]{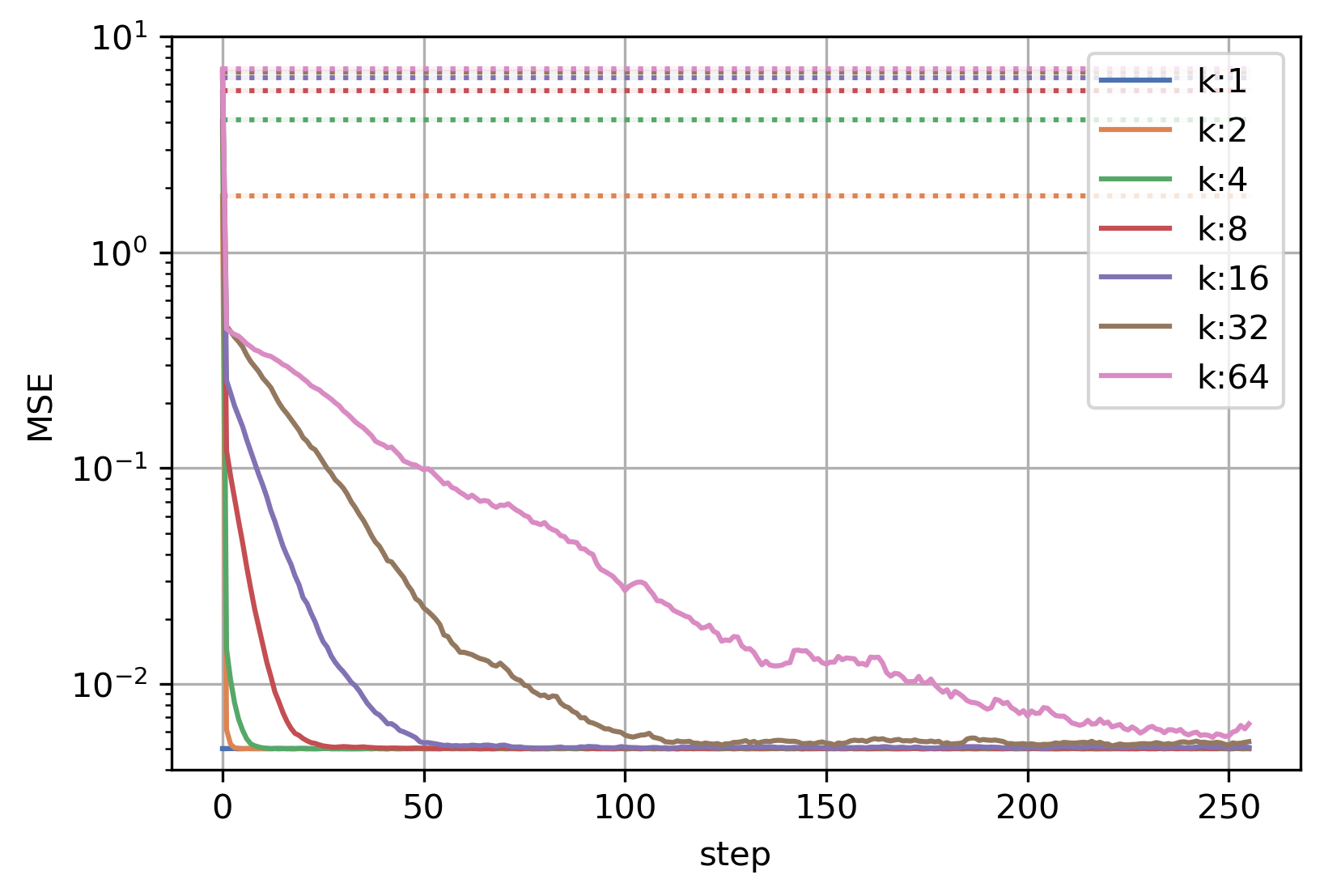}
\includegraphics[width=0.48\textwidth]{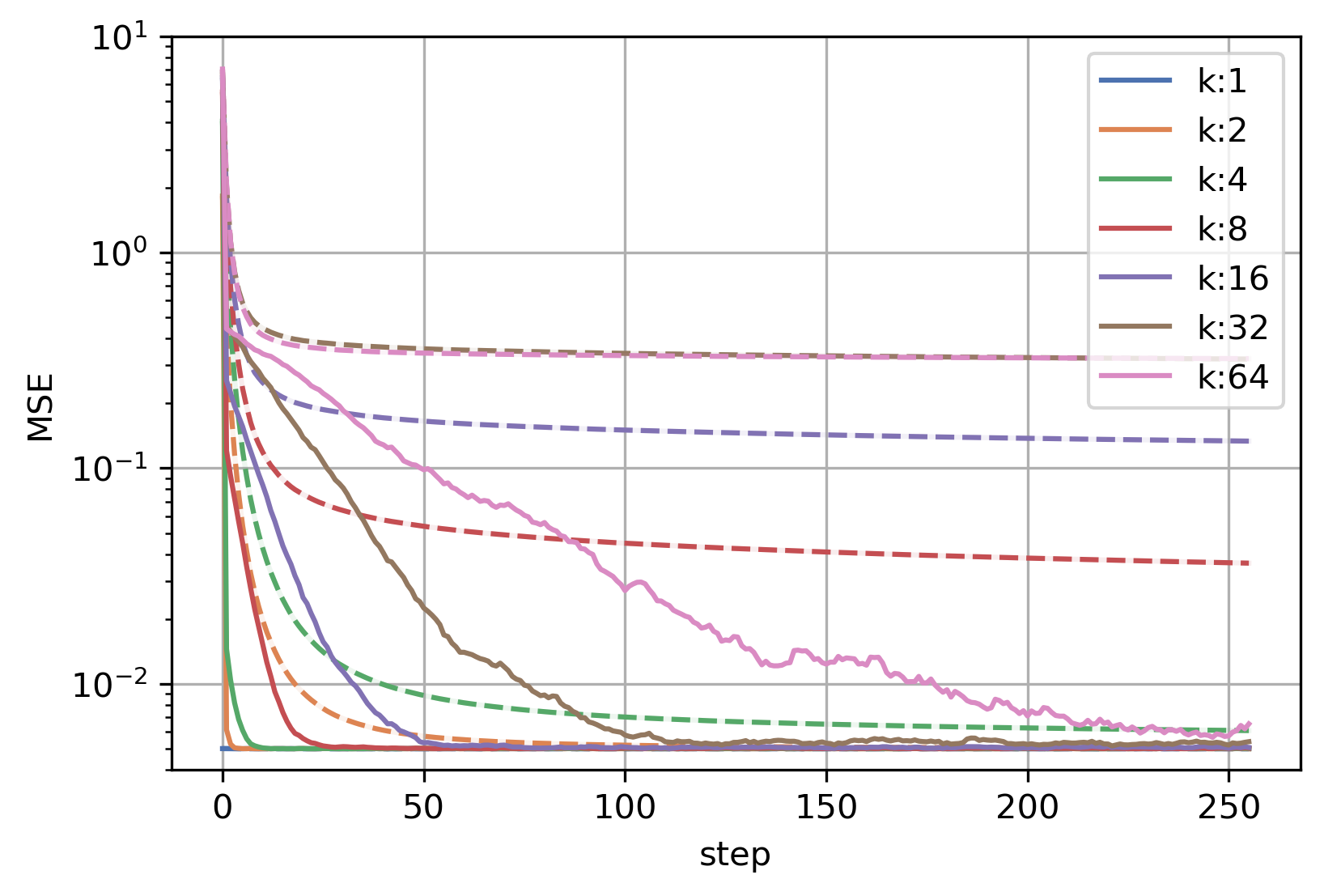}
\caption{Linear regression.
Compares
\texttt{PriorBoost} (solid) with
\texttt{OneShot} (left, dotted) and
\texttt{PBPrefix} (right, dashed)
by plotting
test MSE at each step $t$
for different bag sizes $k$.
}
\label{fig:linear_regression}
\end{figure}

In \Cref{fig:linear_regression}, 
\PriorBoost converges to optimal model quality (i.e., the loss when $k=1$)
for all bag sizes~$k$.
This is in stark contrast to non-adaptive \texttt{OneShot} (i.e., random bagging),
whose test loss gets worse as~$k$ increases.
In the right subplot,
\texttt{PBPrefix} converges much slower than
\texttt{PriorBoost}---and to suboptimal solutions.
This is because the aggregate responses obtained in early steps of the algorithms
are noisy, as the prior $\hth^{(t)}$ is weaker.
Noisy responses are helpful for constructing better bags
in the next iteration (and hence allowing us to learn a stronger prior),
but they can be actively unhelpful if they remain in the training set for too long
(e.g., the data that \texttt{PBPrefix} trains on in later steps).
\texttt{PriorBoost}, however,
only trains on the last slice of aggregate data $(\bX^{(t)},\overline{\by}^{(t)})$,
and therefore ``forgets'' early/noisy mean responses,
leading to better final model quality while also using fewer samples per step.

\subsection{Logistic regression}
\label{sec:experiments_logistic_regression}

For our first logistic regression experiment,
we use the same ground truth model weights,
design matrix,
and Gaussian noise $(\tth, \bX, \beps)$
as in linear regression,
but now we create binary labels by sending them through a sigmoid function and rounding:
$
    y_i = \texttt{round}(\sigma(\bx_i^\sT \tth + \eps_i)) \in \{0, 1\}.
$
After each aggregation step $t$, the oracle rounds the mean
response of each bag $\texttt{round}(\overline{y}_\ell) \in \{0, 1\}$ to get back to a binary label,
which is an additional source of noise.\footnote{%
We randomly round $\overline{y}_\ell = \sfrac{1}{2}$ to $0$ or $1$ in a consistent way
to avoid biasing the distribution of binary aggregate labels.
}
All three algorithms fit logistic regression models with binary cross-entropy loss
and L2 regularization penalty $\frac{\lambda}{2} \norm{\th}_{2}^2$ for $\lambda = 10$.

\begin{figure}[t]
\centering
\hspace{-0.5cm}
\includegraphics[width=0.48\textwidth]{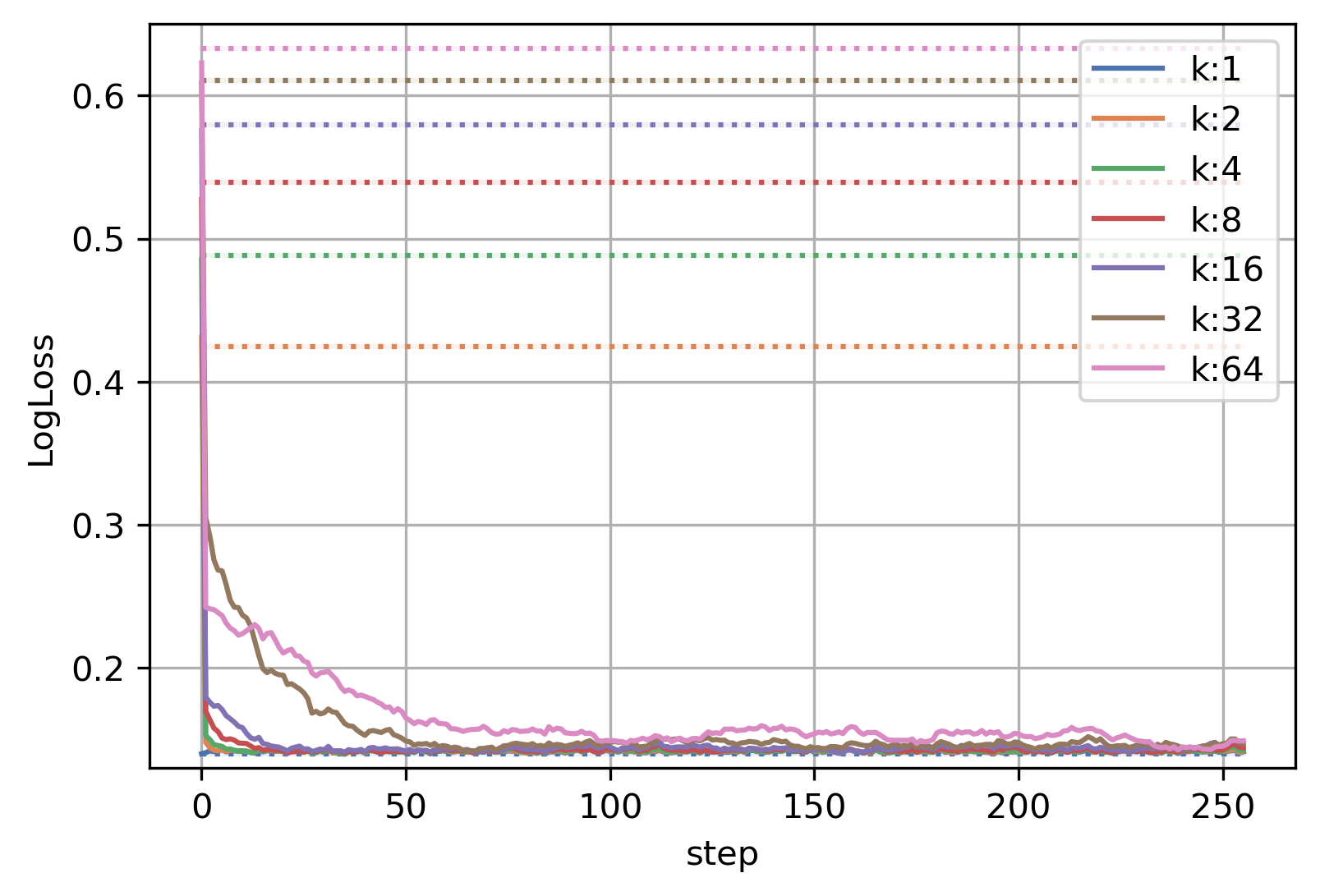}
\includegraphics[width=0.48\textwidth]{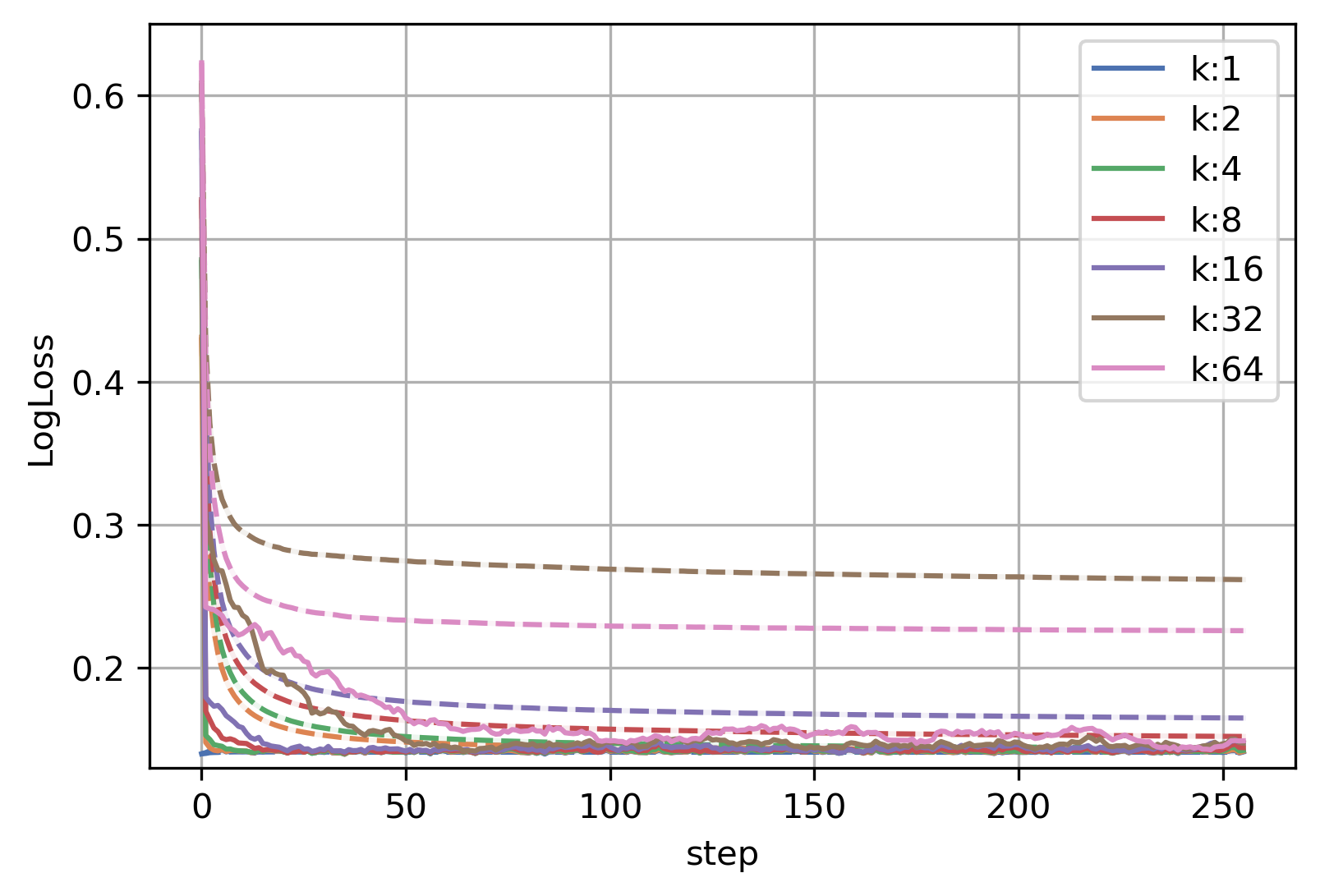}
\caption{Logistic regression.
Compare
\texttt{PriorBoost} (solid) with
\texttt{OneShot} (left, dotted) and
\texttt{PBPrefix} (right, dashed)
by plotting
test log loss at each step $t$
for different bag sizes $k$.
}
\label{fig:logistic_regression}
\end{figure}

Similar to the linear regression experiment,
\Cref{fig:logistic_regression} shows that \PriorBoost
converges to optimality for all bag sizes.
In contrast, \texttt{OneShot} steadily degrades as~$k$ increases.
We also see that by training on all aggregate responses available at step $t$
to learn $\hth^{(t)}$,
\texttt{PBPrefix}
converges slower and to suboptimal solutions for $k \ge 16$.

\subsection{Differential privacy}
\label{sec:experiments_dp}

We now modify \PriorBoost by adding Laplace noise to the aggregate responses
to make the algorithm $\varepsilon$-label DP as described in \Cref{sec:differential_privacy}.
The key observation is that for binary labels and bags of size at least $k$,
we can reduce the scale of the Laplace noise by a factor of $k$.
We use the same experimental setup as in \Cref{sec:experiments_logistic_regression},
geometrically sweep over privacy budgets $0.01 \le \varepsilon \le 100$,
and compare the test loss of
\PriorBoost to an $\varepsilon$-label DP version of random bagging.
Error bars are computed over 10 realizations.

\begin{figure}[H]
\centering
\includegraphics[width=0.48\textwidth]{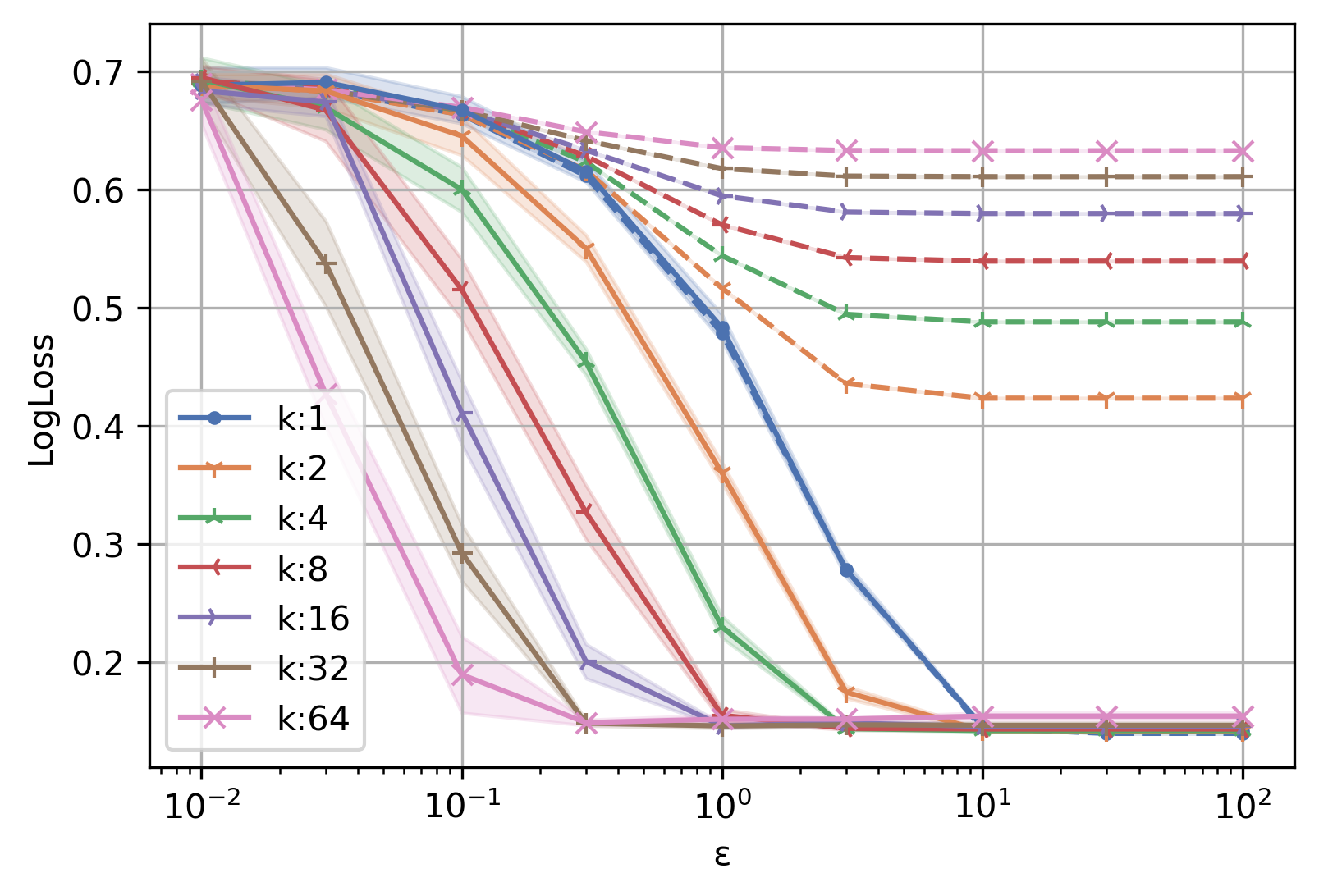}
\caption{$\varepsilon$-label differentially private logistic regression.
Compares final \PriorBoost (solid)
test log loss to \texttt{OneShot} (dashed) for different bag sizes $k$.}
\label{fig:label_dp_epsilon_sweep}
\end{figure}

As we increase the privacy loss $\varepsilon$, the test loss decreases for each value of $k$.
This is expected because the privacy constraint becomes more relaxed, allowing for better model quality.
Note that $\varepsilon = \infty$ corresponds to not using any differential privacy.
For each value of $\varepsilon$,
the utility of \PriorBoost \emph{improves}~as the bag size $k$ increases,
whereas the utility of \texttt{OneShot} degrades as we increase $k$.
This may initially seem surprising,
but recall that \PriorBoost actively reduces the bias of the estimated model by forming homogeneous bags with respect to labels.
As we increase $k$, \PriorBoost (1) effectively maintains a slow growth rate for the bias,
(2) can afford to reduce the scale of its Laplace noise by a factor of $k$,
and (3) gets low-variance mean labels since they are averaged over larger bags.
Therefore, all in all, \PriorBoost favors larger $k$ in this setup.
In particular, it approaches its non-private loss at a faster rate in $\varepsilon$, for larger $k$. 
For example with $k=64$, it already nearly achieves the non-private loss for $\varepsilon \ge 0.3$.  
For \texttt{OneShot} though, larger~$k$ significantly increases the bias of the estimated model, which outweighs the variance reduction and results in a larger loss. Also note that for $k=1$ (singleton bags), \PriorBoost and \texttt{OneShot} match.
In summary, this experiment shows we can achieve more utility for a fixed $\varepsilon$
by using \PriorBoost to learn large curated bags
whose mean labels require less random noise.

\begin{figure}[H]
\centering
\includegraphics[width=0.48\textwidth]{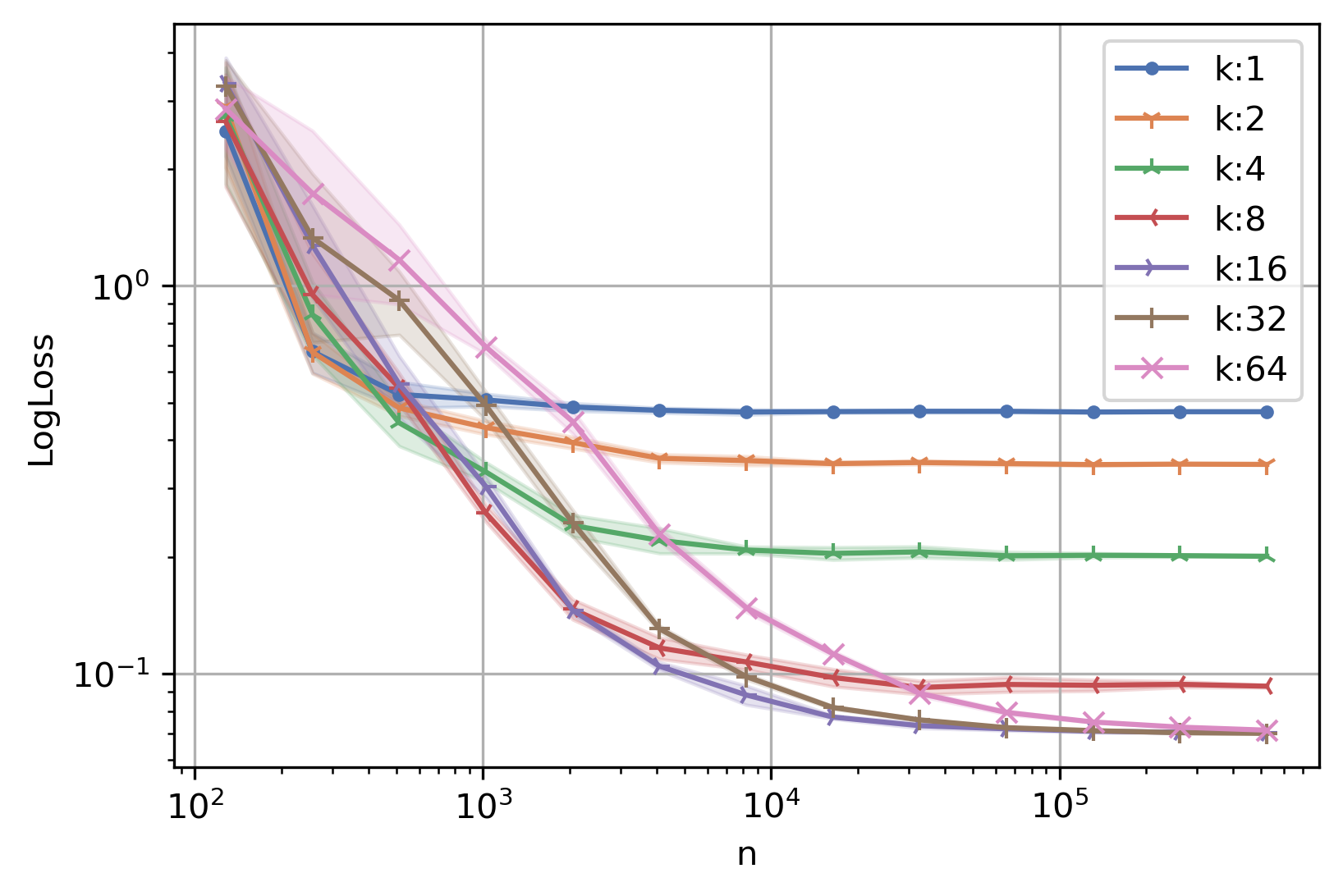}
\caption{Optimal bag sizes $k$ for $\varepsilon$-label DP \texttt{PriorBoost} for logistic regression.
Test loss for $\varepsilon = 1$ as the number of samples $n$ increases.
}
\label{fig:optimal_bag_sizes}
\end{figure}

\paragraph{Optimal bag sizes.}
The plots in Figure~\ref{fig:label_dp_epsilon_sweep} are for fixed~$n$ and $d$ with $n\gg d$. To better understand the effect of bag size on the bias-variance tradeoff, we next run the 
same logistic regression experiment for $d=64$, $T=128$, $\varepsilon=1$,
while varying the total number of samples $n$. 
An intriguing observation from Figure~\ref{fig:optimal_bag_sizes} is that the optimal bag size (i.e., the one minimizing the loss) grows with $n$. The crossover points for optimal $k$ also become farther apart as $n$ grows. For example, $k=4$ is optimal for a smaller range of $n$ compared to $k=16$. As discussed earlier, a larger $k$ yields larger bias while reducing the variance.
However, a virtue of \PriorBoost is that both the bias and variance decrease in the sample size $n$.
The loss plots in \Cref{fig:optimal_bag_sizes} suggest that the decay rate (in $n$) of the bias is faster
than the decay rate of the variance, and so as $n$ grows, the optimal bag size for \PriorBoost becomes larger.

\section*{Conclusion}
This work proposes a novel method for using available prior information for expected responses of samples to construct bags for aggregate learning.
We devise the multi-stage algorithm \PriorBoost to obtain good priors from the aggregate data itself if no public prior is available.
We also propose a differentially private version, as well as intriguing observations about optimal bag sizes.
Our analysis provably shows the advantage of our approach over random bagging, which we back up with strong numerical experiments.

\section*{Acknowledgement}
We would like to thank
Lorne Applebaum,
Ashwinkumar Badanidiyuru,
Lin Chen, Alessandro Epasto, Thomas Fu, Xiaoen Ju, Nick Ondo, Fariborz Salehi, and Dinah Shender
for helpful discussions related to this work.
Adel Javanmard is supported in part by the NSF CAREER Award DMS-1844481 and the NSF Award DMS-2311024.

\bibliographystyle{abbrvnat}
\bibliography{references}

\newpage
\appendix

\section{Missing analysis from \Cref{sec:linear_regression}}
\label{app:linear_regression}

\subsection{Proof of \Cref{thm:linear-regression}}
The derivative of the loss at the minimizer is zero, which gives us:
\[ 
    \bX^\sT (\bS^\sT \bS\by - \bX\hth) = \mtx{0}.
\]
By rearranging the terms, we have
\begin{align*}
\hth - \tth &= (\bX^\sT\bX)^{-1}\bX^\sT\bS^\sT \bS\by - \tth\\
&= (\bX^\sT\bX)^{-1}\bX^\sT\bS^\sT \bS\bX\tth -\tth + (\bX^\sT\bX)^{-1}\bX^\sT\bS^\sT \bS \beps\\
&= (\bX^\sT\bX)^{-1}\bX^\sT(\bS^\sT \bS - \Iden_{n})\bX\tth+ (\bX^\sT\bX)^{-1}\bX^\sT\bS^\sT \bS \beps\,.
\end{align*}
Since the noise vector $\beps \in \R^n$ is independent of the design matrix $\bX$ with
$\E[\beps] = \mtx{0}$ and $\E[\beps\beps^\sT]=\sigma^2\Iden$, we have
\begin{align}
    \E\Big[ \twonorm{\hth-\tth}^2 ~\Big|~ \bX \Big]
    =&
    \twonorm{ (\bX^\sT\bX)^{-1}\bX^\sT(\bS^\sT \bS - \Iden_n)\bX\tth }^2 
    + \sigma^2{\rm trace} ((\bX^\sT\bX)^{-1}\bX^\sT \bS^\sT \bS \bS^\sT \bS \bX (\bX^\sT\bX)^{-1})\,.
    \label{eq:cond-exp}
\end{align}
Further, since the bags are non-overlapping,
we have $\bS\bS^\sT = \Iden_{m}$, by which we get
\begin{align}
{\rm trace} ((\bX^\sT\bX)^{-1}\bX^\sT\bS^\sT \bS \bS^\sT \bS \bX (\bX^\sT\bX)^{-1})
&= 
{\rm trace} ((\bX^\sT\bX)^{-1}\bX^\sT\bS^\sT  \bS \bX (\bX^\sT\bX)^{-1})\nonumber\\
&= \fronorm{(\bX^\sT\bX)^{-1}\bX^\sT\bS^\sT}^2\,.\label{eq:var}
\end{align}
Substituting into \eqref{eq:cond-exp}, we prove the claim.

\subsection{Proof of \Cref{cor:LinearReg}}
By definition of the operator norm, we have
\[
    \twonorm{(\bX^\sT\bX)^{-1}\bX^\sT(\bS^\sT\bS-\Iden_{n})\bX\tth}
    \le
    \opnorm{(\bX^\sT\bX)^{-1}\bX^\sT} \twonorm{(\bS^\sT\bS-\Iden_{n})\bX\tth}\,.
\]
Next, we upper bound the variance term in~\Cref{eq:var} using the inequality
\[
    \fronorm{\bA\bB}^2 \le \min\parens*{\opnorm{\bA}^2\fronorm{\bB}^2, \opnorm{\bB}^2\fronorm{\bA}^2}\,.
\]
We have $\fronorm{\bS}^2 = m$ and $\opnorm{\bS} = 1$ by the Cauchy--Schwarz inequality.
We also assumed $\text{rank}(\bX) \le d$,
which implies
\[
    \fronorm{(\bX^\sT\bX)^{-1}\bX^\sT}\le \sqrt{d} \opnorm{(\bX^\sT\bX)^{-1}\bX^\sT}\,.
\]
Therefore, we obtain
\[
    \fronorm{(\bX^\sT\bX)^{-1}\bX^\sT \bS^\sT}^2\le \opnorm{(\bX^\sT\bX)^{-1}\bX^\sT}^2 \min(m,d)\,.
\]
Combining these two bounds with \Cref{thm:linear-regression} gives the result.

\subsection{Proof of \Cref{lem:sorting}}

The key idea is to rewrite the optimization problem by ``lifting'' the space of optimization variables as follows:
\begin{align}
    &\min_{(B_1,\dots,B_m) \in \mathcal{B}} \quad \sum_{\ell=1}^m \sum_{i\in B_\ell} (\ty_i - c_\ell)^2\label{eq:1dimKmeans-alt} \\
    &~~~~\text{subject to} \quad~~ |B_\ell| \ge k \quad \forall \ell \in [m] \nonumber \\
    &~~~~\phantom{\text{subject to}} \quad~~ c_\ell \in \reals \quad \hspace{0.32cm} \forall \ell \in [m] \nonumber
\end{align}
In words, we introduce the additional variables $c_\ell \in \reals$, for $\ell\in[m]$. It is easy to see that problems~\eqref{eq:1dimKmeans} and~\eqref{eq:1dimKmeans-alt} have the same optimal bagging configurations.
Now, suppose that $\{(B^*_\ell, c^*_\ell) : \ell\in[m]\}$ is an optimal solution to~\eqref{eq:1dimKmeans-alt}.
If the claim is not true, then there exists $\ty_i > \ty_j$ and $c_\ell> c_{\ell'}$ such that $\ty_i\in B^*_{\ell'}$ and $\ty_j\in B^*_\ell$.
We then argue that by assigning $\ty_i$ to $B^*_\ell$ and $\ty_j$ to $B^*_{\ell'}$,
we can reduce the objective value,
which is a contradiction.
To show this, we must prove that
\begin{align*}
(\ty_i -c_{\ell'})^2+ (\ty_j - c_\ell)^2 > (\ty_i -c_{\ell})^2+ (\ty_j - c_{\ell'})^2
&\iff - \ty_i c_{\ell'} - \ty_j c_\ell
>-\ty_i c_\ell - \ty_j c_{\ell'}\\
&\iff (\ty_i - \ty_j) (c_\ell - c_{\ell'})>0\,,
\end{align*}
which is true by our assumption.

\section{Missing analysis from \Cref{sec:glms}}
\label{app:glms}
We recall the notion of strong convexity below.
\begin{definition}
 A function $f:\reals^n \rightarrow \reals$ is \emph{strongly convex} with parameter $\mu$ if the following holds for all $x,y\in \reals$:
 \[f(\by)\ge f(\bx) + s_{\bx}^\sT(\by-\bx) + \frac{\mu}{2}\twonorm{\by-\bx}^2\,,\]
\end{definition}
for any $s_{\bx}\in\partial f(\bx)$, where $\partial f(\bx)$ denotes the set of subgradients of $f$ at $\bx$.

The next lemma states that controlling the estimation error for a strongly convex loss function reduces to controlling the norm of the gradient of the loss at the true model.
\begin{lemma}\label{lem:taylor}
Suppose that the loss $\cL$ is strongly convex with parameter $\mu$ and $\hth = \arg\min_{\th}\cL(\th)$. Then, for any model $\tth$, we have
\[
\twonorm{\hth - \tth} \le \frac{1}{\mu}\twonorm{\cL(\tth)}\,.
\]
In addition, if $\cL$ has a Lipschitz continuous gradient with parameter $L$, we have
\[
 \frac{1}{L}\twonorm{\cL(\tth)}\le \twonorm{\hth - \tth}.
\]
\end{lemma}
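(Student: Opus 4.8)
The plan is to exploit the first-order optimality of $\hth$ together with the strong convexity / smoothness inequalities applied at the pair $(\hth, \tth)$. For the first (lower) bound, I would start from strong convexity written with base point $\hth$ and target point $\tth$: since $\hth$ is the minimizer, $\mathbf 0 \in \partial \cL(\hth)$, so choosing $s_{\hth} = \mathbf 0$ gives
\[
\cL(\tth) \ge \cL(\hth) + \frac{\mu}{2}\twonorm{\tth - \hth}^2.
\]
Next I would write strong convexity (or even just convexity / the subgradient inequality) the other way, with base point $\tth$ and target point $\hth$, using any $s_{\tth} \in \partial\cL(\tth)$ — in our smooth GLM setting $s_{\tth} = \nabla\cL(\tth)$:
\[
\cL(\hth) \ge \cL(\tth) + \nabla\cL(\tth)^\sT(\hth - \tth) + \frac{\mu}{2}\twonorm{\hth-\tth}^2 \;\ge\; \cL(\tth) + \nabla\cL(\tth)^\sT(\hth - \tth).
\]
Adding the two displays cancels $\cL(\hth)$ and $\cL(\tth)$ and yields $\mu\twonorm{\hth - \tth}^2 \le -\nabla\cL(\tth)^\sT(\hth-\tth) \le \twonorm{\nabla\cL(\tth)}\,\twonorm{\hth-\tth}$ by Cauchy--Schwarz; dividing by $\twonorm{\hth-\tth}$ (the bound is trivial if $\hth = \tth$) gives $\twonorm{\hth-\tth} \le \frac1\mu\twonorm{\nabla\cL(\tth)}$, which is the claimed inequality (note the statement writes $\twonorm{\cL(\tth)}$ as shorthand for $\twonorm{\nabla\cL(\tth)}$).

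For the second (upper) bound I would use $L$-smoothness, i.e. $\nabla\cL$ is $L$-Lipschitz, which gives the standard co-coercivity-type or descent-lemma inequality
\[
\cL(\hth) \le \cL(\tth) + \nabla\cL(\tth)^\sT(\hth-\tth) + \frac{L}{2}\twonorm{\hth-\tth}^2.
\]
Since $\cL(\hth) \le \cL(\tth)$ (again optimality of $\hth$, or more simply $\cL(\hth)\le\cL(\th)$ for all $\th$), we get $0 \le \nabla\cL(\tth)^\sT(\hth-\tth) + \frac{L}{2}\twonorm{\hth-\tth}^2$, hence $-\nabla\cL(\tth)^\sT(\hth-\tth) \le \frac{L}{2}\twonorm{\hth-\tth}^2$. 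Combined with the lower bound $\mu\twonorm{\hth-\tth}^2 \le -\nabla\cL(\tth)^\sT(\hth-\tth)$ derived above this already sandwiches things, but to extract $\frac1L\twonorm{\nabla\cL(\tth)} \le \twonorm{\hth - \tth}$ the cleaner route is: by $L$-Lipschitzness of the gradient, $\twonorm{\nabla\cL(\tth)} = \twonorm{\nabla\cL(\tth) - \nabla\cL(\hth)} \le L\twonorm{\tth - \hth}$, using $\nabla\cL(\hth) = \mathbf 0$. Dividing by $L$ gives the result immediately.

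The argument is essentially a sequence of textbook convex-analysis inequalities, so there is no serious obstacle; the only points requiring a little care are (i) handling the subgradient formulation cleanly when $\cL$ is merely strongly convex rather than differentiable — here one just picks $s_{\hth} = \mathbf 0 \in \partial\cL(\hth)$, valid since $\hth$ minimizes $\cL$ — and (ii) making sure the $\twonorm{\cL(\tth)}$ notation in the statement is understood as $\twonorm{\nabla\cL(\tth)}$, consistent with how the lemma is invoked after Theorem~\ref{thm:GLM}. I would also note that the second inequality only needs differentiability of $\cL$ with Lipschitz gradient, not strong convexity, and the first only needs strong convexity plus optimality of $\hth$, so the two halves are logically independent.
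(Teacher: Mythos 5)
Your proposal is correct and follows essentially the same route as the paper's proof: strong convexity applied at $(\hth,\tth)$ in both directions with $\nabla\cL(\hth)=\mathbf{0}$, then Cauchy--Schwarz for the first bound, and the Lipschitz-gradient identity $\twonorm{\nabla\cL(\tth)}=\twonorm{\nabla\cL(\tth)-\nabla\cL(\hth)}\le L\twonorm{\tth-\hth}$ for the second. Your side remarks (that $\twonorm{\cL(\tth)}$ should read $\twonorm{\nabla\cL(\tth)}$, and that the two halves use logically independent hypotheses) are accurate but do not change the argument.
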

\begin{proof}
By writing the definition of strong convexity for $\hth$ and $\tth$, and noting that $\nabla\cL(\hth) = \mtx{0}$, we get
\[
\cL(\tth)\ge \cL(\hth) + \frac{\mu}{2}\twonorm{\tth-\hth}^2\,.
\]
Likewise, by changing the role of $\hth$ and $\tth$, we have
\[
\cL(\hth)\ge \cL(\tth) + \nabla \cL(\tth)^\sT(\hth-\tth)+ \frac{\mu}{2}\twonorm{\tth-\hth}^2\,.
\]
By adding the above two inequalities and rearranging the terms, we arrive at
\[
\nabla \cL(\tth)^\sT(\tth-\hth)\ge \mu\twonorm{\tth-\hth}^2\,.
\]
Next, by Cauchy--Schwarz inequality, $\nabla \cL(\tth)^\sT(\tth-\hth) \le \twonorm{\cL(\tth)} \twonorm{\tth-\hth}$, which along with the previous inequality proves the first claim.

The second claim follows easily from Lipschitz condition.
We write
\[
\twonorm{\nabla\cL(\tth)} = \twonorm{\nabla\cL(\tth) - \nabla\cL(\hth)} 
\le L \twonorm{\tth - \hth}\,,
\]
which completes the proof.
\end{proof}

\subsection{Proof of \Cref{thm:GLM}}

 The gradient of the loss in~\eqref{eq:GLM_estimator} reads as
 \begin{align*}
 \nabla \cL(\th) &= 
 \frac{1}{n}\sum_{\ell=1}^m \sum_{i\in B(\ell)} \frac{1}{a_i(\phi)} (\overline{y}_\ell - b'(\th^\sT\bx_i)) \bx_i\\
 &= \bX^\sT \bD^{-1} (\bS^\sT\bS \by - b'(\bX\th))\,.
 \end{align*}
 We next consider the following bias-variance decomposition:
 \begin{align}\label{eq:decomposition}
    \E\Big[\twonorm{\nabla \cL(\tth)}^2 ~\big|~ \bX \Big]
    = 
    \twonorm{\E\Big[\nabla \cL(\tth) ~\big|~ \bX \Big]}^2
    + \trace(\Cov(\nabla \cL(\tth) \mid \bX))\,.
 \end{align}
 Under the GLM, the responses $y_i$ are independent conditioned on $\bx_i$.
 In addition, 
 \[
 \E[y_i \mid \bx_i] = b'(\th^\sT\bx_i),\quad \Var(y_i \mid \bx_i) = a_i(\phi) b''(\th^\sT\bx_i)\,.
 \]
 We therefore get
 \begin{align}
 \E\Big[\nabla \cL(\tth) ~\big|~ \bX \Big]
 &= \bX^\sT \bD^{-1} (\bS^\sT\bS b'(\bX\tth) - b'(\bX\tth))\nonumber\\
 &= \bX^\sT \bD^{-1} (\bS^\sT\bS  - \Iden) b'(\bX\tth)\,.\label{eq:E-nabla-L}
 \end{align}
 In addition,
 \begin{align*}
 \Cov(\nabla \cL(\tth) \mid \bX) &=
 \E\Big[\bX^\sT\bD^{-1}\bS^\sT\bS(\by-b'(\bX\tth))(\by-b'(\bX\tth))^\sT \bS^\sT\bS\bD^{-1} \bX ~\big|~ \bX\Big]\\
 &= \bX^\sT\bD^{-1}\bS^\sT\bS \bD\diag(b''(\bX\tth)) \bS^\sT\bS\bD^{-1} \bX\,. 
 \end{align*}
 Therefore,
 \begin{align}\label{eq:V-nabla-L}
 \trace(\Cov(\nabla \cL(\tth) \mid \bX))
 = \fronorm{\bX^\sT\bD^{-1}\bS^\sT\bS \bD^{1/2}\diag(b''(\bX\tth))^{1/2}}^2\,.
 \end{align}
 Combining~\eqref{eq:E-nabla-L} and \eqref{eq:V-nabla-L} into~\eqref{eq:decomposition} completes the proof.
 
\subsection{Proof of \Cref{cor:GLM_estimator_upper_bound}}

 We upper bound each term of~\eqref{eq:nabla-L} separately. For the first term, we have 
 \[
 \twonorm{\bX^\sT \bD^{-1}(\bS^\sT\bS  - \Iden) \bmu}\le \opnorm{\bX^\sT \bD^{-1}} \twonorm{(\bS^\sT\bS  - \Iden) \bmu}\,.
 \]
 For the second term, we develop two upper bounds and take the minimum of the two.
 
 For the first upper bound we have
 \begin{align*}
     \fronorm{\bX^\sT\bD^{-1}\bS^\sT\bS \bD^{1/2}\diag(b''(\bX\tth))^{1/2}}^2
     &\le d\opnorm{\bX^\sT\bD^{-1}\bS^\sT\bS \bD^{1/2}\diag(b''(\bX\tth))^{1/2}}^2\\
     &\le d \opnorm{\bX^\sT\bD^{-1}}^2 \opnorm{\bS^\sT\bS}^2 \opnorm{\bD^{1/2}\diag(b''(\bX\tth))^{1/2}} \\
     &\le d \opnorm{\bX^\sT\bD^{-1}}^2  \infnorm{\bv}\,.
 \end{align*}
 For the second upper bound we have
 \begin{align}
     \fronorm{\bX^\sT\bD^{-1}\bS^\sT\bS \bD^{1/2}\diag(b''(\bX\tth))^{1/2}}^2
     \le \opnorm{\bX^\sT\bD^{-1}}^2 \fronorm{\bS^\sT\bS \bD^{1/2}\diag(b''(\bX\tth))^{1/2}}^2\,,
 \end{align}
 using the inequality $\fronorm{\bA\bB}^2\le \opnorm{\bA}^2 \fronorm{\bB}^2$.

 We also note that
 \begin{align*}
      \fronorm{\bS^\sT\bS \bD^{1/2}\diag(b''(\bX\tth))^{1/2}}^2
     &=
     \trace\left(\bS^\sT \bS \bD \diag(b''(\bX\tth)) \bS^\sT\bS\right)\\
     & = \trace\left( \bD \diag(b''(\bX\tth)) \bS^\sT\bS \bS^\sT \bS\right)\\
     &=
     \trace\left(\bD \diag(b''(\bX\tth)) \bS^\sT\bS\right)\\
     & = \trace\left( \bD \diag(b''(\bX\tth)) \bS^\sT \bS\right)\\
     &= \sum_{\ell=1}^m \sum_{i\in B_\ell} \frac{a_i(\phi)b''(\bx_i^\sT\tth)}{|B_\ell|}
      = \sum_{\ell=1}^m \sum_{i\in B_\ell} \frac{v_i}{|B_\ell|}\,.
 \end{align*}
 Combining the above bounds, we obtain
 \begin{align}
     \fronorm{\bX^\sT\bD^{-1}\bS^\sT\bS \bD^{1/2}\diag(b''(\bX\tth))^{1/2}}^2
     \le \opnorm{\bX^\sT\bD^{-1}}^2 \min
     \Big(\sum_{\ell=1}^m \sum_{i\in B_\ell} \frac{v_i}{|B_\ell|}, d \infnorm{\bv}\Big)\,.
 \end{align}
 This completes the proof of the corollary.
 
\section{Missing analysis from \Cref{sec:comparison}}

\subsection{Proof of Theorem~\ref{thm:UB-risk}}
We prove the claim using the result of Corollary~\ref{cor:LinearReg}. Recall the notation $\bmu: =\bX\tth$. By Lemma~\ref{lem:sorting}, we know that the solution $\bS$ given by~\eqref{eq:1dimKmeans} has a simple sorting structure. We use that structure to construct a bagging scheme to upper bound the term $\twonorm{(\bS^\sT\bS- \Iden)\bmu}^2$. Without loss of generality, assume that $n$ is divisible by $k$. Sort the entries of $\bmu$ and construct the bags as $B(\ell) = \{(\ell-1)k+1, \dotsc, \ell k\}$ for $\ell =1, \dotsc, m:= n/k$. In addition, let $\bar{\mu}_\ell$ indicate the average of $\mu_i$'s over bag $\ell$. This construction of bags satisfy the constraint of \eqref{eq:1dimKmeans} and so we have %
\begin{align*}
    \twonorm{(\bS^\sT\bS- \Iden)\bmu}^2 &\le \sum_{\ell=1}^m \sum_{j=(\ell-1)k+1}^{\ell k}(\mu_j - \bar{\mu}_\ell)^2\\
    &\le k \sum_{\ell=1}^m (\mu_{\ell k} - \mu_{(\ell-1)k+1})^2\\
    &\le k \Big(\sum_{\ell=1}^m \mu_{\ell k} - \mu_{(\ell-1)k+1}\Big)^2\\
    &= k(\mu_{(1)} - \mu_{(n)})^2\\
    &\le 4k \infnorm{\bmu}^2.
\end{align*}
where $\bar{\mu}_i$ in the first inequality denotes the average of $\mu_i$'s over bag $i$.

We next bound $\infnorm{\bmu}^2$. Since $\bx_i$'s are $\kappa$-subgaussian, we have that $\mu_i$ is $\kappa \twonorm{\tth}$-subgaussian, for $i\in[n]$. 

\begin{lemma}\label{lem:E-max-psi}
Suppose that $\xi_1,\dotsc, \xi_n$ are centered $\eta$-subgaussian random variables. Then, with probability at least $1-\frac{1}{n}$, we have
\[
\max_{i\in[n]} \xi_i^2 \le 2 \eta^2 \log n.
\]
\end{lemma}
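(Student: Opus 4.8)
The statement is a routine exponential-Markov-plus-union-bound estimate; in fact only the Orlicz-type inequality $\E[\exp(\xi_i^2/\eta^2)]\le 2$ from the definition of $\eta$-subgaussianity is needed here (centeredness plays no role in this particular bound, though it is natural to state it for consistency with how the lemma is applied).

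First I would fix a threshold $t>0$ and, for each index $i\in[n]$, bound the tail of $\xi_i^2$ by applying Markov's inequality to the nonnegative random variable $\exp(\xi_i^2/\eta^2)$:
\[
\Pr\!\big(\xi_i^2\ge t\big)=\Pr\!\big(e^{\xi_i^2/\eta^2}\ge e^{t/\eta^2}\big)\le e^{-t/\eta^2}\,\E\!\big[e^{\xi_i^2/\eta^2}\big]\le 2e^{-t/\eta^2}.
\]
Next I would take a union bound over the $n$ variables, which gives
\[
\Pr\!\Big(\max_{i\in[n]}\xi_i^2\ge t\Big)\le 2n\,e^{-t/\eta^2}.
\]
Finally, choosing $t=2\eta^2\log n$ makes the right-hand side of order $1/n$, so on the complementary event one has $\max_{i\in[n]}\xi_i^2<2\eta^2\log n$, which is exactly the claimed conclusion.

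I do not anticipate any genuine obstacle: the argument is essentially two lines. The only point requiring a little care is matching the constant $2$ coming from the definition of subgaussianity against the target failure probability $1/n$; this is purely bookkeeping in the choice of $t$ (taking $t$ a hair above $2\eta^2\log n$, or absorbing the extra factor into the stated constant). When the lemma is later invoked, the surrounding text has already identified the subgaussian constant of the $\mu_i=\bx_i^\sT\tth$ as $\kappa\twonorm{\tth}$, so applying the bound there is immediate.
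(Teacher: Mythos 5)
Your proposal is correct and follows essentially the same route as the paper's own proof: Markov's inequality applied to $\exp(\xi_i^2/\eta^2)$ using the Orlicz bound $\E[e^{\xi_i^2/\eta^2}]\le 2$, followed by a union bound and the choice $t=2\eta^2\log n$. You are also right to flag the bookkeeping point that this choice yields failure probability $2/n$ rather than $1/n$ --- the paper's proof has exactly the same slack (it derives $2/n$ but states $1/n$ in the lemma), which is harmless for how the lemma is used but would strictly require $t=\eta^2(2\log n+\log 2)$ or a restated probability.
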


By using \Cref{lem:E-max-psi} we obtain 
\begin{align}\label{eq:bias-B1}
    \twonorm{(\bS^\sT\bS - \Iden)\bmu}^2\le 4k \infnorm{\bmu}^2 \le 8k\kappa^2\twonorm{\tth}^2 
    \log n \,,
\end{align}
with probability at least $1-1/n$. 
We next use the concentration bounds on the singular values of matrices with i.i.d.\ subgaussian rows to bound $\opnorm{(\bX^\sT\bX)^{-1}\bX^\sT}$. Specifically, we use \citet[Equation (5.25)]{vershynin_2012},
which states that with probability at least $1-2e^{-c_1t^2}$, the following holds true:
\begin{align}\label{eq:op-concen0}
    \opnorm{\frac{1}{n}\bX^\sT\bX - \bSigma}\le \max(\delta_1,\delta_1^2), \quad \delta_1 = C_1\sqrt{\frac{d}{n}}+\frac{t}{\sqrt{n}}\,,
\end{align}
for constants $c_1,C_1>0$ that depend only on $\kappa$. We define the probabilistic event $\event$ as follows:
\begin{align}\label{eq:event-first}
\event_1:= \left\{\opnorm{\frac{1}{n}\bX^\sT\bX - \bSigma}\le C\sqrt{\frac{d}{n}} \right\}\,,
\end{align}
for some fixed constant $C>0$.
Then using~\eqref{eq:op-concen0} we have $\prob(\event_1)\ge 1-2e^{-c_1d}$, for some constant depending on $C$ and $\kappa$.
Under the event $\event$, and by using Weyl's inequality for singular values, we have
\begin{align}\label{eq:sigma-min}
\sigma_{\min}(\bX) \ge \sqrt{n\sigma_{\min}(\bSigma) - C\sqrt{dn}}\,.
\end{align}

Combining~\eqref{eq:bias-B1} and~\eqref{eq:sigma-min}
 in Corollary~\ref{cor:LinearReg}, we obtain the result.

\begin{proof}[Proof of Lemma~\ref{lem:E-max-psi}]
Since $\xi_i$ is $\eta$-subgaussian, by definition $\E[\exp(X^2/\eta^2)] \le 2$.
Exponentiating and using Markov’s inequality, we obtain
\[
    \prob(|\xi_i|\ge t) = \prob(e^{\xi_i^2/\eta^2}\ge e^{t^2/\eta^2}) \le e^{-t^2/\eta^2}\E [e^{\xi_i^2/\eta^2}] \le 2 e^{-t^2/\eta^2}\,.
\]
Choosing $t = \eta\sqrt{2\log n}$ and union bounding over $i\in[n]$, we get
\[
\prob\Big(\max_{i\in[n]} |\xi_i|\ge \eta\sqrt{2\log n}\Big) \le \frac{2}{n}\,,
\]
which completes the proof of lemma.
\end{proof}

\subsection{Proof of Theorem~\ref{thm:LB-risk}}

We recall the characterization of the risk given by Theorem~\ref{thm:linear-regression}:
\begin{align}\label{eq:decomp2}
\E\Big[ \twonorm{\hth-\tth}^2 ~\Big|~ \bX \Big]
    =
    \twonorm{(\bX^\sT\bX)^{-1}\bX^\sT (\bS^\sT\bS - \Iden_n)\bX\tth}^2 +\sigma^2 \fronorm{(\bX^\sT\bX)^{-1}\bX^\sT\bS^\sT}^2\,.
\end{align}
We introduce the shorthand $\bLambda: = \bS^\sT\bS-\Iden_n$. Our next lemma lower bounds the expected bias.

\begin{lemma}\label{lem:bias-LB} Under the assumptions of Theorem~\ref{thm:LB-risk}, the following holds with probability at least $1- 2e^{-c_1d} -2d^{-c}$,
\[
\E\Big[\twonorm{(\bX^\sT\bX)^{-1}\bX^\sT \bLambda\bX\tth}^2 ~\Big|~ \bX\Big] \ge \left(1-\frac{1}{k} - C\frac{d\sqrt{\log d}}{\sigma_{\min}(\bSigma)\sqrt{n}}\right)^2 \twonorm{\tth}^2\,,
\]
where constants $C, c, c_1>0$ only depend on the subgaussian norm $\kappa$.
\end{lemma}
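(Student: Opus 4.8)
### Proof plan for Lemma~\ref{lem:bias-LB}

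\textbf{Setup.} The plan is to bound the bias term $\E[\twonorm{(\bX^\sT\bX)^{-1}\bX^\sT \bLambda\bX\tth}^2 \mid \bX]$ from below. Since the bags are chosen independently of the data, the matrix $\bLambda = \bS^\sT\bS - \Iden_n$ is independent of $\bX$ (though the randomness of $\bS$ is what we average over, or we can condition on a fixed bag structure and argue for a worst case — either way the key point is independence). I would first write $(\bX^\sT\bX)^{-1}\bX^\sT \bLambda\bX\tth = \bA \bLambda \bX \tth$ where $\bA := (\bX^\sT\bX)^{-1}\bX^\sT$, and note that $\twonorm{\bA \bv} \ge \sigma_{\min}(\bA)\twonorm{\bv} = \twonorm{\bv}/\sigma_{\max}(\bX)$ for any $\bv$. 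So the task reduces to lower bounding $\twonorm{\bLambda \bX \tth}$ and upper bounding $\sigma_{\max}(\bX)$.

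\textbf{Lower bounding $\twonorm{\bLambda\bX\tth}$.} Write $\bu := \bX\tth \in \R^n$; its entries $u_i = \bx_i^\sT\tth$ are i.i.d.\ centered $\kappa\twonorm{\tth}$-subgaussian with variance $\tth^\sT\bSigma\tth$. The matrix $\Iden_n - \bS^\sT\bS$ projects onto mean-zero-within-each-bag vectors, so $\bLambda\bu = -(\Iden_n - \bS^\sT\bS)\bu$ has squared norm $\sum_{\ell}\sum_{i\in B_\ell}(u_i - \bar u_\ell)^2$. Expanding, this equals $\twonorm{\bu}^2 - \sum_\ell |B_\ell|\,\bar u_\ell^2 = \twonorm{\bu}^2 - \sum_\ell \frac{1}{k}\bigl(\sum_{i\in B_\ell} u_i\bigr)^2$ when all bags have size $k$. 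By independence of $\bS$ and $\bu$, conditioning on $\bS$ and taking expectation over $\bu$: $\E[\twonorm{\bu}^2] = n\,\tth^\sT\bSigma\tth$ and $\E[(\sum_{i\in B_\ell}u_i)^2] = k\,\tth^\sT\bSigma\tth$, so $\E[\twonorm{\bLambda\bu}^2] = (n - m)\tth^\sT\bSigma\tth = n(1-1/k)\tth^\sT\bSigma\tth$. I then need a high-probability version: concentrate $\twonorm{\bu}^2$ around $n\tth^\sT\bSigma\tth$ (Hanson--Wright / subexponential concentration, giving fluctuations of order $\sqrt{n}$ up to logs), and control $\sum_\ell \frac{1}{k}(\sum_{i\in B_\ell}u_i)^2$ from above; each term $\frac1k(\sum_{i\in B_\ell}u_i)^2$ is a subexponential variable with mean $\tth^\sT\bSigma\tth$, and summing $m = n/k$ independent such terms concentrates around $m\,\tth^\sT\bSigma\tth$ with deviation $O(\sqrt{m})$ up to logs. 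Combining, $\twonorm{\bLambda\bX\tth}^2 \ge (n - m - O(\sqrt{n\log d}))\,\tth^\sT\bSigma\tth$ with the stated probability; since $n = \Omega(d^2\log d)$, the error term is lower order relative to the main term, which is $\Theta(n)$ times $\tth^\sT\bSigma\tth \ge \sigma_{\min}(\bSigma)\twonorm{\tth}^2$.

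\textbf{Upper bounding $\sigma_{\max}(\bX)$ and assembling.} On the event $\event_1$ from~\eqref{eq:event-first}, Weyl's inequality gives $\sigma_{\max}(\bX)^2 \le n\opnorm{\bSigma} + C\sqrt{dn}$, which holds with probability $\ge 1 - 2e^{-c_1 d}$. Then
\[
\E\Big[\twonorm{\bA\bLambda\bX\tth}^2 \mid \bX\Big] \ge \frac{\twonorm{\bLambda\bX\tth}^2}{\sigma_{\max}(\bX)^2} \ge \frac{n(1-1/k)\,\tth^\sT\bSigma\tth - O(\sqrt{n\log d})\,\tth^\sT\bSigma\tth}{n\opnorm{\bSigma} + C\sqrt{dn}}.
\]
Dividing numerator and denominator by $n\opnorm{\bSigma}$ and using $\tth^\sT\bSigma\tth \ge \sigma_{\min}(\bSigma)\twonorm{\tth}^2$, I want to massage this into the form $\bigl(1 - \frac1k - C\frac{d\sqrt{\log d}}{\sigma_{\min}(\bSigma)\sqrt n}\bigr)^2\twonorm{\tth}^2$. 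The cleanest route: lower bound the ratio by $(1 - 1/k)\frac{\tth^\sT\bSigma\tth}{\opnorm{\bSigma}}(1 - \text{small})$ and then check that all the lower-order corrections — the $O(\sqrt{n\log d})/n$ from the numerator concentration, the $C\sqrt{d/n}$ from the denominator, and the gap between $\tth^\sT\bSigma\tth/\opnorm{\bSigma}$ and $\twonorm{\tth}^2$ — can each be absorbed into a single term of size $O\bigl(\frac{d\sqrt{\log d}}{\sigma_{\min}(\bSigma)\sqrt n}\bigr)$, then fold it inside a square (legitimate since $(1-a)^2 \le 1 - a$ type comparisons go the right way for $a$ small, or simply because $(1-x)^2$ lower bounds the linear expression we derive). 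The union bound over the bias-concentration event ($\ge 1 - 2d^{-c}$) and $\event_1$ ($\ge 1 - 2e^{-c_1 d}$) yields the claimed probability.

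\textbf{Main obstacle.} The hard part is the high-probability lower bound on $\twonorm{\bLambda\bX\tth}^2$, i.e.\ controlling $\sum_{\ell=1}^{m}\frac1k\bigl(\sum_{i\in B_\ell}u_i\bigr)^2$ from \emph{above} with a deviation that is genuinely lower-order than $n$. Each summand is a squared subgaussian, hence subexponential with heavy-ish tails, and there are only $m = n/k$ of them, so the natural concentration gives deviation $\sim\sqrt{m}\,\mathrm{polylog}$; one must check this is $o(n)$ and track the $\mathrm{polylog}(d)$ versus $\mathrm{polylog}(n)$ carefully against the $n=\Omega(d^2\log d)$ scaling so that the final error term has exactly the advertised shape $\frac{d\sqrt{\log d}}{\sigma_{\min}(\bSigma)\sqrt n}$. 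A secondary subtlety is that if the bag sizes are not all exactly $k$ (the theorem says "each of size $k$", so this is presumably fine), the identity $\sum_\ell|B_\ell|\bar u_\ell^2$ is still handled the same way. Everything else — Weyl's inequality, the operator-norm bound, and the final algebraic bookkeeping — is routine given the results already established in the excerpt.
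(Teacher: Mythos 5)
There is a genuine gap at the very first step of your reduction. You write $\bA := (\bX^\sT\bX)^{-1}\bX^\sT \in \reals^{d\times n}$ and invoke $\twonorm{\bA\bv} \ge \sigma_{\min}(\bA)\twonorm{\bv} = \twonorm{\bv}/\sigma_{\max}(\bX)$ for $\bv = \bLambda\bX\tth$. But $\bA$ is a wide matrix with an $(n-d)$-dimensional kernel (the orthogonal complement of $\mathrm{col}(\bX)$), so its smallest singular value over $\reals^n$ is zero and the inequality fails for general $\bv$: it only controls the component of $\bv$ lying in $\mathrm{col}(\bX)$. The vector $\bLambda\bX\tth$ is the image of $\bX\tth$ under (minus) the projection onto within-bag-mean-zero vectors and does \emph{not} lie in $\mathrm{col}(\bX)$, so you would additionally have to show that its projection onto $\mathrm{col}(\bX)$ retains most of its norm. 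That is precisely the content of controlling $\bX^\sT\bLambda\bX\tth$, which is where all the work lies (the paper does this via an entrywise Hanson--Wright bound showing $\frac{1}{n}\bX^\sT\bLambda\bX \approx \frac{\trace(\bLambda)}{n}\bSigma$ up to error $d\sqrt{\log d/n}$ in operator norm); your plan instead spends its effort concentrating $\twonorm{\bLambda\bX\tth}^2$, which is the wrong quantity.

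A second, quantitative problem remains even after repairing this via the valid chain $\twonorm{(\bX^\sT\bX)^{-1}\bX^\sT\bv} \ge \twonorm{\bX^\sT\bv}/\sigma_{\max}(\bX^\sT\bX)$: the leading term you would obtain is of order $(1-\tfrac1k)\,\tth^\sT\bSigma\tth/\opnorm{\bSigma}$, which in the worst case ($\tth$ aligned with the bottom eigenvector of $\bSigma$) is $(1-\tfrac1k)\frac{\sigma_{\min}(\bSigma)}{\sigma_{\max}(\bSigma)}\twonorm{\tth}^2$ --- weaker than the claimed $(1-\tfrac1k)^2\twonorm{\tth}^2$ by a condition-number factor that Assumption~\ref{ass:eig} does not preclude from being small. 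The paper avoids both issues by a different device: it views $\widehat{\balpha} = (\bX^\sT\bX)^{-1}\bX^\sT\bLambda\bX\tth$ as the least-squares solution of $\min_{\balpha}\twonorm{\bX\balpha - \bLambda\bX\tth}^2$, identifies its population limit $\balpha_* = \frac{\trace(\bLambda)}{n}\tth = (1-\tfrac1k)\tth$ exactly, and uses a basic-inequality argument plus the Hanson--Wright estimate to show $\twonorm{\widehat{\balpha}-\balpha_*} \le C\frac{d\sqrt{\log d}}{\sigma_{\min}(\bSigma)\sqrt{n}}\twonorm{\tth}$, so the triangle inequality gives the clean $(1-\tfrac1k)$ leading constant with no condition-number loss. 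You would need to adopt something like this comparison-to-the-population-solution strategy to recover the lemma as stated.
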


Our next lemma lower bound the variance term in~\eqref{eq:decomp2}. 
\begin{lemma}\label{lem:var-S}
Under the assumptions of Theorem~\ref{thm:LB-risk}, the following holds with probability at least $1- 2e^{-c_1d} -2d^{-c}$,
\[
\E\Big[\fronorm{(\bX^\sT\bX)^{-1}\bX^\sT \bS}^2 ~\Big|~ \bX\Big] \ge \frac{1}{kn} \cdot \frac{\trace(\bSigma) - \sqrt{d\log d}}{(\opnorm{\bSigma} + c_0\sqrt{\frac{d}{n}})^2} \,,
\]
where constants $c, c_0, c_1>0$ only depend on the subgaussian norm $\kappa$.
\end{lemma}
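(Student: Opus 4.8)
The plan is to lower bound the variance term $\fronorm{(\bX^\sT\bX)^{-1}\bX^\sT\bS^\sT}^2$ appearing in \Cref{thm:linear-regression} by first peeling off the spectrum of $\bX^\sT\bX$ and then reducing what remains to a scalar concentration statement about the bag means. Writing $G := \bX^\sT\bX$ and $M := \bX^\sT\bS^\sT\bS\bX \succeq 0$, and using $\fronorm{A}^2 = \trace(AA^\sT)$ with cyclicity of the trace,
\[
\fronorm{(\bX^\sT\bX)^{-1}\bX^\sT\bS^\sT}^2 = \trace\!\big(G^{-1}MG^{-1}\big) = \trace\!\big(G^{-2}M\big) \ge \lambda_{\min}\!\big(G^{-2}\big)\,\trace(M) = \frac{\trace(M)}{\opnorm{\bX^\sT\bX}^2},
\]
where the inequality is the elementary fact that $\trace(AB) \ge \lambda_{\min}(A)\,\trace(B)$ for positive semidefinite $A,B$.

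It then remains to lower bound $\trace(M) = \trace(\bX^\sT\bS^\sT\bS\bX) = \fronorm{\bS\bX}^2$ and to upper bound $\opnorm{\bX^\sT\bX}$. For the former, note from \eqref{eq:S} that the $\ell$-th row of $\bS\bX$ equals $\sqrt{k}\,\bar\bx_\ell^\sT$, where $\bar\bx_\ell := \tfrac1k\sum_{i\in B_\ell}\bx_i$ is the bag-mean feature vector, so $\fronorm{\bS\bX}^2 = k\sum_{\ell=1}^m\twonorm{\bar\bx_\ell}^2 = \sum_{\ell=1}^m\twonorm{\bz_\ell}^2$ with $\bz_\ell := \sqrt k\,\bar\bx_\ell$. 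Since the bags are disjoint of size $k$ and the $\bx_i$ are i.i.d., the $\bz_1,\dots,\bz_m$ are i.i.d.; averaging $k$ independent centered $\kappa$-subgaussian vectors and rescaling by $\sqrt k$ shows each $\bz_\ell$ is centered, $\kappa$-subgaussian, with $\E[\bz_\ell\bz_\ell^\sT] = \bSigma$, so in particular $\E\twonorm{\bz_\ell}^2 = \trace(\bSigma)$.

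Next I would establish the one-sided concentration $\sum_{\ell=1}^m\twonorm{\bz_\ell}^2 \ge m\big(\trace(\bSigma) - \sqrt{d\log d}\big)$ with probability at least $1 - 2d^{-c}$. Each $\twonorm{\bz_\ell}^2 - \trace(\bSigma)$ is a centered sub-exponential variable; applying the Hanson--Wright inequality to the quadratic form with block-diagonal matrix $\diag(\bSigma,\dots,\bSigma) \in \reals^{md\times md}$ and invoking Assumption~\ref{ass:eig} (which gives $\trace(\bSigma^2) \le \opnorm{\bSigma}\trace(\bSigma) \le C_{\max}^2 d$ and $\opnorm{\bSigma}\le C_{\max}$), the deviation $t = m\sqrt{d\log d}$ falls in the sub-Gaussian regime with failure exponent of order $m\log d$, which yields the bound. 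Finally, on the event $\event_1$ from the proof of \Cref{thm:UB-risk} (probability $\ge 1 - 2e^{-c_1 d}$) one has $\opnorm{\bX^\sT\bX} \le n\big(\opnorm{\bSigma} + c_0\sqrt{d/n}\big)$. Combining the three displays and using $m = n/k$,
\[
\fronorm{(\bX^\sT\bX)^{-1}\bX^\sT\bS^\sT}^2 \ge \frac{m\big(\trace(\bSigma) - \sqrt{d\log d}\big)}{n^2\big(\opnorm{\bSigma} + c_0\sqrt{d/n}\big)^2} = \frac1{kn}\cdot\frac{\trace(\bSigma) - \sqrt{d\log d}}{\big(\opnorm{\bSigma} + c_0\sqrt{d/n}\big)^2},
\]
and a union bound over the two failure events gives the stated probability $1 - 2e^{-c_1 d} - 2d^{-c}$. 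The partition $\bS$ is data-independent and merely specifies which rows of $\bX$ are averaged, so the argument is unchanged if $\bS$ is itself random and independent of the data (condition on $\bS$).

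The only genuinely delicate step is the one-sided concentration of $\sum_\ell\twonorm{\bz_\ell}^2$: we need a lower tail sharp enough to leave only an $O(\sqrt{d\log d})$ additive slack inside the parenthesis while keeping the failure probability at $d^{-c}$, and this is precisely where the $\kappa$-subgaussian hypothesis and the boundedness of the spectrum of $\bSigma$ enter. The matrix-trace inequality in the first step and the operator-norm control in the last step are routine, the latter reusing the concentration estimate already established for \Cref{thm:UB-risk}.
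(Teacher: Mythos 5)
Your proof is essentially correct but takes a genuinely different route from the paper's. The paper expands $\fronorm{(\bX^\sT\bX)^{-1}\bX^\sT\bS^\sT}^2$ as a sum over the $m$ columns $\bs_\ell$ of $\bS^\sT$, lower-bounds each term by $\twonorm{\tfrac1n\bX^\sT\bs_\ell}^2/\opnorm{\tfrac1n\bX^\sT\bX}^2$, and then concentrates $\twonorm{\bX^\sT\bs}^2=\sum_{j=1}^d(\tilde\bx_j^\sT\bs)^2$ as a sum over the $d$ \emph{columns} of $\bX$. You instead apply the trace inequality $\trace(G^{-2}M)\ge\trace(M)/\opnorm{G}^2$ globally and identify $\trace(M)=\fronorm{\bS\bX}^2=k\sum_{\ell=1}^m\twonorm{\bar{\bx}_\ell}^2$ as a sum of $m$ i.i.d.\ squared norms of rescaled bag means. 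Both routes use the same operator-norm control of $\bX^\sT\bX$ and arrive at the same bound after setting $m=n/k$. Your version has a real structural advantage: the $m$ bag-mean vectors $\bz_\ell$ are genuinely independent (disjoint bags of i.i.d.\ samples), whereas the paper's argument treats the $d$ columns of $\bX$ as independent, which under Assumption~\ref{ass:SG} holds only if the coordinates of each $\bx_i$ are independent. The price you pay is that your concentration is over $m$ terms each with sub-exponential norm of order $\kappa^2 d$ (the coordinates within a bag mean are dependent, so one cannot do better than the triangle inequality in general), so the Bernstein exponent for deviation $t=m\sqrt{d\log d}$ is of order $m\log d/d$, not $m\log d$; you therefore need $m\gtrsim d$, which does follow from $n=\Omega(d^2\log d)$ for any reasonable $k$, but should be stated.

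The one step that does not work as written is the appeal to Hanson--Wright with the block-diagonal matrix $\diag(\bSigma,\dots,\bSigma)$: Hanson--Wright requires the entries of the underlying vector to be independent, and the coordinates within each block $\bz_\ell$ are not (nor can you write $\bz_\ell=\bSigma^{1/2}\bu_\ell$ with independent-coordinate $\bu_\ell$ for a general subgaussian law). Consequently the variance proxy $\trace(\bSigma^2)\lesssim d$ is not available in general; $\Var(\twonorm{\bz_\ell}^2)$ can be as large as order $d^2$ for a $\kappa$-subgaussian vector with bounded covariance. Replace this step by plain Bernstein for the i.i.d.\ centered sub-exponential variables $\twonorm{\bz_\ell}^2-\trace(\bSigma)$ with $\psi_1$-norm $O(\kappa^2 d)$; with $t=m\sqrt{d\log d}$ and $m\ge C\kappa^4 d$ this still gives failure probability $2d^{-c}$, and the rest of your argument goes through unchanged.
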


Proof of Theorem~\ref{thm:LB-risk} follows by using Lemma~\ref{lem:bias-LB} and Lemma~\ref{lem:var-S} in the decomposition~\eqref{eq:decomp2}.

\subsection{Proof of Lemma~\ref{lem:bias-LB}}

Consider the following optimization problem
\begin{align}
    \widehat{\balpha} = \frac{1}{2n}\arg\min_{\balpha\in\reals^d} \twonorm{\bX\balpha - \bLambda \bX\tth}^2\,.
\end{align}
It is easy to see that by the KKT condition $\widehat{\balpha} = (\bX^\sT\bX)^{-1}\bX^\sT\bLambda \bX\tth$, and so we are interested in the norm of the solution to the above optimization problem. In order to do this, we define $\balpha_*:= \frac{\trace(\bLambda)}{n}\tth$. As we will see later this is indeed the solution of the population version of the above loss (when $n\to\infty$). The strategy is to upper bound $\twonorm{\widehat{\balpha}-\balpha_*}$ from which we obtain a lower bound on $\twonorm{\widehat{\balpha}}$.

By the optimality of $\widehat{\balpha}$ we have
\begin{align*}
    0&\le \frac{1}{2n}\twonorm{\bX\balpha_* - \bLambda \bX\tth}^2 - \frac{1}{2n}\twonorm{\bX\widehat{\balpha} - \bLambda \bX\tth}^2\\
    &= \frac{1}{n}(\balpha_*-\widehat{\balpha})^\sT\bX^\sT (\bX\balpha_* - \bLambda \bX\tth) - \frac{1}{2n}\twonorm{\bX(\widehat{\balpha} - \balpha_*)}^2\,.
\end{align*}
Rearranging the terms we get
\begin{align*}
    \frac{1}{2n}\twonorm{\bX(\widehat{\balpha} - \balpha_*)}^2
    \le \twonorm{\balpha_*-\widehat{\balpha}} \frac{1}{n}\twonorm{\bX^\sT (\bX\balpha_* - \bLambda \bX\tth)}\,.
\end{align*}
The left-hand side can be also lower bounded by 
\[
\frac{1}{2}\sigma_{\min}\Big(\frac{1}{n}\bX^\sT\bX\Big) \twonorm{\widehat{\balpha}-\balpha_*}^2\le \frac{1}{2n}\twonorm{\bX(\widehat{\balpha} - \balpha_*)}^2.
\]
Combining the last two inequalities we arrive at
\begin{align}\label{eq:alpha-diff}
    \twonorm{\widehat{\balpha}-\balpha_*} \le \frac{2\twonorm{\bX^\sT (\bX\balpha_* - \bLambda \bX\tth)}}{n\sigma_{\min}(\bX^\sT\bX/n)}\,.
\end{align}
By using concentration bound on the singular values of matrices with i.i.d subgaussian rows, see~\citet[Equation (5.25)]{vershynin_2012}, we have that with probability at least $1-2e^{-c_1t^2}$,
\begin{align}\label{eq:op-concen}
    \opnorm{\frac{1}{n}\bX^\sT\bX - \bSigma}\le \max(\delta_1,\delta_1^2), \quad \delta_1 = C_1\sqrt{\frac{d}{n}}+\frac{t}{\sqrt{n}}\,,
\end{align}
for constants $c_1,C_1>0$ which depend only on $\kappa$. We define the probabilistic event $\event_1$ as follows:
\[\event_1:= \left\{\opnorm{\frac{1}{n}\bX^\sT\bX - \bSigma}\le C\sqrt{\frac{d}{n}} \right\}\,,\]
for some fixed constant $C>C_1$. Then using~\eqref{eq:op-concen} we have $\prob(\event_1)\ge 1-2e^{-c_1d}$, for some constant depending on $C$ and $\kappa$.

We next bound the numerator of the right-hand side of \eqref{eq:alpha-diff}. We write
\begin{align}\label{eq:tri}
    \frac{1}{n}\twonorm{\bX^\sT (\bX\balpha_* - \bLambda \bX \tth)} \le  \twonorm{\Big(\frac{1}{n} \bX^\sT \bX-\bSigma\Big)\balpha_*}+ 
    \twonorm{\bSigma\balpha_* - \frac{1}{n}\bX^\sT\bLambda \bX\tth}\,.
\end{align}
Under event $\event_1$ the first term is bounded by 
$C\sqrt{d/n}\twonorm{\balpha_*}$.

In addition, by its definition it is easy to see that $\bLambda$ is a projection matrix of rank $n-m$. More specifically, it projects onto the space of vectors which are zero mean on each of the $m$ bags. Therefore $\trace(\bLambda) = n-m$ and so 
\begin{align}\label{eq:norm_alpha}
\twonorm{\balpha_*} = \frac{n-m}{n}\twonorm{\tth} = (1-\frac{1}{k})\twonorm{\tth}.
\end{align}
Hence, under the event $\event_1$ the first term in~\eqref{eq:tri} is bounded by
\begin{align}\label{eq:first-term}
    \twonorm{\Big(\frac{1}{n} \bX^\sT \bX-\bSigma\Big)\balpha_*}
    \le \opnorm{\Big(\frac{1}{n} \bX^\sT \bX-\bSigma\Big)}\twonorm{\balpha_*} < C\twonorm{\tth}\sqrt{\frac{d}{n}}\,.
\end{align}

To bound the second term in the ~\eqref{eq:tri}, we note that by definition of $\balpha_*$,
\begin{align}
    \twonorm{\bSigma\balpha_* - \frac{1}{n}\bX^\sT \bLambda \bX\tth} &=
    \frac{1}{n} \twonorm{\Big(\trace(\bLambda)\bSigma - \bX^\sT \bLambda \bX\Big)\tth}\nonumber\\
    &\le \frac{\twonorm{\tth}}{n} \opnorm{\trace(\bLambda)\bSigma - \bX^\sT\bLambda\bX}\nonumber\\
    &\le \twonorm{\tth} \frac{d}{n} 
    \Big|\trace(\bLambda)\bSigma - \bX^\sT\bLambda\bX \Big|_{\infty}\,,\label{eq:second-term1}
\end{align}
where for a matrix $A$, the notation $|\bA|_\infty$ refers to the maximum absolute values of its entries. In the last step we used the inequality $\opnorm{\bA}\le d |\bA|_\infty$, for symmetric $\bA\in\reals^{d\times d}$.

We next proceed by upper bounding the right-hand side of \eqref{eq:second-term1}. We first show that the matrix of interest side has zero mean. To see this note that for any $i,j\in [d]$ we have
\[
\E[(\bX^\sT\bLambda \bX)_{ij}] = \E[\tilde{\bx}_i^\sT \bLambda \tilde{\bx}_j] = \trace(\bLambda \E(\tilde{\bx}_j\tilde{\bx}_i^\sT)) = \trace(\bLambda) \Sigma_{ij}\,,
\]
where $\tilde{\bx}_i$ denotes the $i$-th column of $\bX$. Therefore, $\E[\bX^\sT\bLambda \bX] = \trace(\bLambda) \bSigma$. We next use the (asymmetric version of) Hanson--Wright inequality~(see, e.g, \citet[Theorem 6.2.1]{vershynin2018high}), by which we get that for any fixed $i,j\in [d]$,
\begin{align}
    \prob\left\{|(\bX^\sT\bLambda \bX)_{ij} - \trace(\bLambda) \Sigma_{ij}|\ge t\right\}\le 2\exp\left\{-c\min\Big( \frac{t^2}{\kappa^2n(1-1/k)}, \frac{t}{\kappa^2}\Big)\right\}\,,
\end{align}
where we used the fact that $\fronorm{\bLambda} = n-m = n - n/k$, since it is a projection matrix of rank $n-m$. By union bounding over the $d^2$ coordinates $i,j\in[d]$, we get
\begin{align}\label{eq:HW}
    \prob\left\{|\bX^\sT\bLambda \bX - \trace(\bLambda) \bSigma|_{\infty}\ge t\right\}\le 2d^2\exp\left\{-c_0\min\Big( \frac{t^2}{\kappa^2n(1-1/k)}, \frac{t}{\kappa^2}\Big)\right\}\,.
\end{align}
Fix a constant $C>\sqrt{\frac{2}{c_0}}\kappa$ and define the event $\event_2$ as follows
\begin{align*}
    \event_2:= \left\{|\bX^\sT\bLambda \bX - \trace(\bLambda) \bSigma|_{\infty} \le C\sqrt{n\log d}\right\}\,.
\end{align*}
Using the deviation bound~\eqref{eq:HW} we have $\prob(\event_2)\ge 1-2d^{-c}$ with $c= \frac{C^2c_0}{\kappa^2(1-1/k)} - 2>0$.
Recalling the bound~\eqref{eq:second-term1}, on the event $\event_2$ we have
\begin{align}\label{eq:second-term2}
   \twonorm{\bSigma\balpha_* - \frac{1}{n}\bX^\sT \bLambda \bX\tth} \le C\twonorm{\tth} d\sqrt{\frac{\log d}{n}}\,.
\end{align}
Putting together equations~\eqref{eq:tri}, \eqref{eq:first-term}, \eqref{eq:second-term2},
we obtain that on the event $\event:= \event_1\cap \event_2$,
\begin{align}\label{eq:numerator}
\frac{1}{n} \twonorm{\bX^\sT (\bX\balpha_* - \Lambda \bX\tth)} \le C\twonorm{\tth} d\sqrt{\frac{\log d}{n}}\,,
\end{align}
for a constant $C$ depending on the subgaussian norm $\kappa$. In addition on the event $\event_1$, we have
\begin{align}\label{eq:sigma-min}
\sigma_{\min}\Big(\frac{1}{n}\bX^\sT\bX\Big) \ge \sigma_{\min}(\bSigma) - \opnorm{\frac{1}{n}\bX^\sT\bX - \bSigma}\ge \sigma_{\min}(\bSigma) - C\sqrt{\frac{d}{n}}\,.
\end{align}
Next by combining~\eqref{eq:sigma-min} and \eqref{eq:numerator} into~\eqref{eq:alpha-diff}, we get that 
\begin{align}
\twonorm{\balpha_*-\widehat{\balpha}}\le \frac{Cd}{\sigma_{\min}(\bSigma)}\sqrt{\frac{\log d}{n}}\twonorm{\tth},
\end{align}
for some constant $C>0$. Note that here we used the fact that $d= o(n)$. Therefore, by using triangle inequality, on the event $\event$
\[
\twonorm{\widehat{\balpha}} \ge \twonorm{\balpha_*} - \twonorm{\widehat{\balpha}-\balpha_*}
\ge \left(1-\frac{1}{k} - C\frac{d\sqrt{\log d}}{\sigma_{\min}(\bSigma)\sqrt{n}}\right) \twonorm{\tth}\,,
\]
for a constant $C>0$ that depends on the subgaussian norm $\kappa$. 

We also have
\[
\prob(\event) = 1- \prob(\event_1^c\cup \event_2^c) \ge 1- \prob(\event_1^c)- \prob(\event_2^c) 
\ge 1- 2 e^{-c_1d}- 2d^{-c}\,, 
\]
which along with the previous equation gives the desired result.
\subsection{Proof of Lemma~\ref{lem:var-S}}

Write $\bS^\sT = [\bs_1|\dotsc|\bs_m]$ with $\bs_i\in \reals^{n}$ and $\twonorm{\bs_i} = 1$. We then have 
\begin{align}\label{eq:sum_s}
\fronorm{(\bX^\sT\bX)^{-1}\bX^\sT \bS}^2 = \sum_{i=1}^n \twonorm{(\bX^\sT\bX)^{-1}\bX^\sT \bs_i}^2.
\end{align}
We next show that for any unit vector $\bs$ which is independent of data $(\by,\bX)$ we have
\begin{align}\label{eq:var-inter1}
\twonorm{(\bX^\sT\bX)^{-1}\bX^\sT \bs}^2 \ge \frac{\trace(\bSigma) - \sqrt{d\log d}}{n^2(\opnorm{\bSigma} + c_0\sqrt{\frac{d}{n}})^2} \left(1- 2e^{-c_1d} -2d^{-c}\right)\,,
\end{align}
which together with~\eqref{eq:sum_s} and the fact that $m=n/k$, implies the claim of Lemma~\ref{lem:var-S}. 

Define $\bv:= (\bX^\sT\bX)^{-1}\bX^\sT \bs$. Therefore, $\frac{1}{n}\bX^\sT\bX\bv = \frac{1}{n} \bX^\sT \bs$, which implies that
\begin{align}\label{eq:stationary}
\opnorm{\frac{1}{n}\bX^\sT\bX}^2 \twonorm{\bv}^2\ge \twonorm{\frac{1}{n}\bX^\sT\bs}^2\,.
\end{align}
Our strategy to lower bound $\E[\twonorm{\bv}^2]$ is to upper bound the left-hand side of~\eqref{eq:stationary}  and lower bound it right-hand side.

For the first task, recall the concentration bound~\eqref{eq:op-concen}. By taking $t = c'\sqrt{d}$ in that bound, we obtain
\begin{align}\label{eq:op-norm1}
    \prob\left(\opnorm{\frac{1}{n}\bX^\sT\bX - \bSigma} \le c_0\sqrt{\frac{d}{n}} \right) \ge 1-2e^{-c_1d},
\end{align}
for some constants $c_0,c_1$ depending on $\kappa$, the subgaussian norm of rows of $\bX$.
We refer to the probabilistic event in~\eqref{eq:op-norm1} by $\event_1$.

We then proceed to the second task, i.e., lower bounding $\twonorm{\frac{1}{n}\bX^\sT\bs}$. To do this, denote the columns of $\bX\in\reals^{n\times d}$ by $\tilde{\bx}_1,\dotsc, \tilde{\bx}_d\in\reals^{n}$. In this notation, 
\begin{align}\label{eq:sum}
\twonorm{\bX^\sT\bs}^2 = \sum_{\ell=1}^d (\tilde{\bx}_\ell^\sT \bs)^2 := \sum_{\ell=1}^d Z_\ell^2\,.
\end{align}
By assumption, $Z_\ell = \tilde{\bx}_\ell^\sT \bs$ are independent subgaussian random variables with $\E[Z_\ell^2] = \Sigma_{\ell,\ell}$ and the subgaussian norm $\|Z_\ell\|_{\psi_2}\le C\kappa$ for a universal constant $C>0$. Therefore, by~\citet[Remark 5.18 and Lemma 5.14]{vershynin_2012}, $Z_\ell^2 - \Sigma_{\ell,\ell}$  are independent centered sub-exponential random
variables with $\|Z_\ell^2 - \Sigma_{\ell,\ell}\|_{\psi_1}\le 2\|Z_\ell^2\|_{\psi_1}\le 4\|Z_{\ell}\|_{\psi_2}^2\le 4C^2\kappa^2:= C_0$. Here, $\|\cdot\|_{\psi_1}$ refers to the subexponential norm of a random variable. We can therefore use an exponential deviation inequality, \citet[Corollary 5.17]{vershynin_2012} to control sum \eqref{eq:sum}. This gives us for every $\eps\ge 0$,
\[
    \prob\left(\Big|\twonorm{\bX^\sT\bs}^2-\trace(\bSigma) ~\Big|~ \ge \eps d\right)
    =
    \prob\left(\Big|\sum_{\ell=1}^d Z_\ell^2 - \trace(\bSigma) ~\Big|~ \ge \eps d\right)
    \le
    2 \exp\Big[-c\min\Big(\frac{\eps^2}{C_0^2}, \frac{\eps}{C_0}\Big)d\Big]\,,
\]
where $c>0$ is an absolute constant. We take $\eps = \sqrt{(\log d)/d}$ and define the probabilistic event  
\[
\event_2: = \left\{ \Big|\twonorm{\bX^\sT\bs}^2-\trace(\bSigma) \Big| \le \sqrt{d\log d}\right\}\,.
\]
By the above deviation bound we have $\prob(\event_2)\ge 1- 2d^{-c}$ for some constant $c>0$.

We next consider the event $\event:=\event_1\cap\event_2$. Using~\eqref{eq:op-norm1} and the above bound on $\prob(\event_2)$ we get
\[
\prob(\event) =1 - \prob(\event_1^c\cup\event_2^c)
\ge 1- \prob(\event_1^c) - \prob(\event_2^c)
\ge 1- 2e^{-c_1d} -2d^{-c}\,.
\]
Further, on the event $\event$ we have
\begin{align}
    &\opnorm{\frac{1}{n}\bX^\sT\bX} \le \opnorm{\bSigma} + \opnorm{\frac{1}{n}\bX^\sT\bX -\bSigma}\le \opnorm{\bSigma} + c_0\sqrt{\frac{d}{n}}\,.\\
    &\twonorm{\bX^\sT\bs}^2 \ge \trace(\bSigma) - \Big| \twonorm{\bX^\sT\bs}^2-\trace(\bSigma)\Big| \ge  \trace(\bSigma) - \sqrt{d\log d}\,.
\end{align}
Therefore, by invoking~\eqref{eq:stationary}, on the event $\event$ we have
\begin{align}
    \twonorm{\bv}^2 \ge \frac{\twonorm{\frac{1}{n}\bX^\sT \bs}^2}{\opnorm{\frac{1}{n}\bX^\sT\bX}^2}\ge
    \frac{\trace(\bSigma) - \sqrt{d\log d}}{n^2(\opnorm{\bSigma} + c_0\sqrt{\frac{d}{n}})^2}\,.
\end{align}
Since $\twonorm{\bv}^2$ is non-negative by an application of Markov's inequality we get
\[
\E[\twonorm{\bv}^2] \ge \frac{\trace(\bSigma) - \sqrt{d\log d}}{(\opnorm{\bSigma} + c_0\sqrt{\frac{d}{n}})^2} \prob(\event) \ge 
\frac{\trace(\bSigma) - \sqrt{d\log d}}{n^2(\opnorm{\bSigma} + c_0\sqrt{\frac{d}{n}})^2} \left(1- 2e^{-c_1d} -2d^{-c}\right)\,.
\]
This completes the proof of~\eqref{eq:var-inter1} and concludes the proof of Lemma~\ref{lem:var-S}.

\subsection{Proof of Theorem~\ref{thm:UB-risk-approximate}}
The proof is similar to the proof of Theorem~\ref{thm:UB-risk}. We consider the bias-variance decomposition of the upper bound given in  Corollary~\eqref{cor:LinearReg}. 

We have 
\begin{align}\label{eq:app-step1}
    \twonorm{(\tilde{\bS}^\sT\tilde{\bS}-\Iden_n)\bX\tth}^2 \le 2\twonorm{(\bS^\sT\bS-\Iden_n)\bX\tth}^2 + 2\twonorm{(\bS^\sT\bS-\tilde{\bS}^\sT\tilde{\bS})\bX\tth}^2\,.
\end{align}
The first term is bounded in Theorem~\ref{thm:UB-risk}. For the second term, we bound it as
\begin{align}
\twonorm{(\bS^\sT\bS-\tilde{\bS}^\sT\tilde{\bS})\bX\tth}^2 &\le \opnorm{\bS^\sT\bS-\tilde{\bS}^\sT\tilde{\bS}}^2\opnorm{\bX}^2\twonorm{\tth}^2\nonumber\\
&\le \eps^2 \twonorm{\tth}^2 \sigma_{\max}(\bX^\sT\bX)\,.\label{eq:app-step2}
\end{align}
Combining~\eqref{eq:app-step1}  and~\eqref{eq:app-step2} into Corollary~\ref{cor:LinearReg}, we see that the mismatch between bagging configuration $\bS$ and $\tilde{\bS}$ contributes an inflation term to the model risk which is upper bounded by 
\begin{align*}
    &\opnorm{(\bX^\sT\bX)^{-1}\bX^\sT}^2 \twonorm{(\bS\bS^\sT - \tilde{\bS}\tilde{\bS}^\sT)\bX\tth}^2\\
    &\le \sigma_{\max}((\bX^\sT\bX)^{-1})\;\eps^2 \twonorm{\tth}^2 \sigma_{\max}(\bX^\sT\bX)\\
    &= \frac{\sigma_{\max}\left(\frac{1}{n}\bX^\sT\bX\right)}{\sigma_{\min}\left(\frac{1}{n}\bX^\sT\bX\right)}\eps^2 \twonorm{\tth}^2\,.
\end{align*}

Note that the result of Theorem~\ref{thm:UB-risk} is under an event with probability at least $1-1/n-2e^{-cd}$. Under this same event, we have $\opnorm{\frac{1}{n}\bX^\sT\bX -\bSigma}\le C\sqrt{\frac{d}{n}}$ (see Equation~\eqref{eq:event-first}). Therefore, by Weyl's inequality for singular values we have
\[
\sigma_{\max}\left(\frac{1}{n}\bX^\sT\bX\right) \le \sigma_{\max}(\bSigma) + C\sqrt{\frac{d}{n}}\,,\quad
\sigma_{\min}\left(\frac{1}{n}\bX^\sT\bX\right) \ge \sigma_{\min}(\bSigma) - C\sqrt{\frac{d}{n}}\,,
\]
which completes the proof of theorem.

\section{Missing analysis from \Cref{sec:algorithm}}
\label{app:algorithm}

\subsection{Proof of Lemma~\ref{lem:fast_k_means_solve}}

We build on the observation in \Cref{lem:sorting} about the sorted structure of an optimal solution.
First, sort the points by their~$\ty_i$ value in $O(n \log n)$ time.
Next, we present a dynamic programming algorithm that optimally slices the sorted list,
i.e., a stars-and-bars partition where each part has size at least $k$.

Define the function $f_k(i)$ to be the objective of an optimal solution for the subproblem defined by the first $i$ points.
It follows that
\[
    f_k(i)
    = 
    \begin{cases}
        \infty & \text{if $i < 0$} \\
        0 & \text{if $i = 0$} \\
        \min_{k \le s \le i} \set*{f_k(i - s) + \sum_{j=i-s+1}^i \parens*{\ty_j - \mu_{i,s}}^2} & \text{if $i \ge 1$}
    \end{cases}
\]
where
\[
    \mu_{i,s} = \frac{1}{s} \sum_{j = i - s + 1}^i \ty_j.
\]
This recurrence considers all suffixes of size $s \ge k$ as the last cluster, computes their sum of squares error,
and recursively solves the subproblem on the remaining points via $f_k(i-s)$.
This naively leads to an $O(n^3)$-time dynamic programming algorithm.
However, there are two observations that allow us to reduce the running time to $O(nk)$:
\begin{enumerate}
    \item We can assume each cluster in an optimal solution has size $k \le s < 2k$.
    If not, we can split a cluster of size $s \ge 2k$ into two parts without increasing the objective.
    It follows that we can compute each $f_k(i)$ by considering $O(k)$ recursive states.
    
    \item We can iteratively compute the sum of squared errors $d(i,s) := \sum_{j=i-s+1}^i (\ty_j - \mu_{i,s})^2$
    in constant time, as shown in \citet{wang2011ckmeans}:
    \begin{align*}
        d(i,s) &= d(i,s-1) + \frac{s-1}{s} \parens*{\ty_{i-s+1} - \mu_{i,s-1}}^2 \\
        \mu_{i,s} &= \frac{\ty_{i-s+1} + (s-1)\mu_{i,s-1}}{s}
    \end{align*}
    This means each value of $f_k(i)$ can be computed in $O(k)$ time.
\end{enumerate}
Putting everything together, we can compute $f_k(n)$ and reconstruct an optimal clustering in $O(nk)$ time after sorting.

\end{document}